\documentclass[lettersize,journal]{IEEEtran}
\usepackage{amsmath,amssymb,amsfonts}
\usepackage{algorithmic}
\usepackage[ruled,vlined,linesnumbered]{algorithm2e}
\usepackage{subcaption}
\usepackage{array}
\usepackage[caption=false,font=normalsize,labelfont=sf,textfont=sf]{subfig}
\usepackage{textcomp}
\usepackage{stfloats}
\usepackage{url}
\usepackage{verbatim}
\usepackage{graphicx}
\usepackage{balance}
\usepackage{cite}
\usepackage{xcolor}
\usepackage{comment}

\newtheorem{theorem}{Theorem}

\newtheorem{proposition}[theorem]{Proposition}

\newenvironment{proof}{\textit{Proof.}}{{\leavevmode\nobreak\hfil\penalty50\hskip0em\vadjust{}
		\nobreak\hfil$\Box$\parfillskip=0pt\finalhyphendemerits=0\par}\vspace{1ex} }

\usepackage{multirow}
\usepackage{array}
\usepackage{booktabs}
\newcolumntype{L}[1]{>{\centering\arraybackslash}m{#1}}

\begin{document}

\title{Conflict-Based Lazy Search for Fast Multi-Manipulator Planning}

\author{Dongliang Zheng,~\IEEEmembership{Member,~IEEE,} Zhipeng Wang,~\IEEEmembership{Member,~IEEE,} Siqi Wang, Yuxi Lu,~\IEEEmembership{Member,~IEEE,} \\ Bin He,~\IEEEmembership{Senior Member,~IEEE,} Hesheng Wang,~\IEEEmembership{Senior Member,~IEEE,} and Panagiotis Tsiotras,~\IEEEmembership{Fellow,~IEEE}% <-this % stops a space
\thanks{This work was supported in part by the Science and Technology Commission of Shanghai Municipality (STCSM) under Grant 25ZR1404010, in part by the Shanghai Pujiang Program under Grant 25PJA140, in part by the National Natural Science Foundation of China under Grant 62088101, in part by the STCSM under Grant 2021SHZDZX0100, in part by the Fundamental Research Funds for the Central Universities.}% <-this % stops a space.
\thanks{Dongliang Zheng, Zhipeng Wang, Siqi Wang, Yuxi Lu, and Bin He are with the Shanghai Research Institute for Intelligent Autonomous Systems, Shanghai Institute of Intelligent Science and Technology, the State Key Laboratory of Autonomous Intelligent Unmanned Systems, and Frontiers Science Center for Intelligent Autonomous Systems of Ministry of Education, Tongji University, Shanghai 201203, China Email:
        {\tt\small dzheng46@tongji.edu.cn; wangzhipeng@tongji.edu.cn; yuxilu@tongji.edu.cn; 2310881@tongji.edu.cn; binhe@tongji.edu.cn}}
\thanks{Hesheng Wang is with the Department of Automation, Shanghai Jiao Tong University, Shanghai 200240, China. Email:
        {\tt\small wanghesheng@sjtu.edu.cn}}
\thanks{Panagiotis Tsiotras is with the School of Aerospace Engineering and Institute for Robotics and Intelligent Machines, Georgia Institute of Technology, Atlanta, GA 30332, USA. Email:
        {\tt\small tsiotras@gatech.edu}}%
}

\providecommand{\IEEEkeywords}[1]{\textbf{\textit{Index terms---}} #1}

% % The paper headers
% \markboth{Journal of \LaTeX\ Class Files,~Vol.~14, No.~8, August~2021}%
% {Shell \MakeLowercase{\textit{et al.}}: A Sample Article Using IEEEtran.cls for IEEE Journals}

\maketitle

\begin{abstract}
Employing multiple manipulators can boost efficiency and accomplish tasks that a single manipulator cannot do. However, real-time planning for multiple manipulators in a cluttered workspace still poses significant challenges for planning algorithms. 
This paper proposes a new planning algorithm called Conflict-Based Lazy Search (CBLS) for multi-manipulator planning.
CBLS is built on Conflict-Based Search (CBS), an efficient multi-agent pathfinding (MAPF) algorithm that has shown an order of magnitude speedup over previous approaches~\cite{Sharon2015CBS, Andreychuk2022Multi}.  
CBS addresses MAPF by solving many single-agent pathfinding (SAPF) problems. Thus, its planning time directly depends on the efficiency of the SAPF algorithm adopted.
Our CBLS algorithm enhances CBS with precomputation and lazy search.
First, a lazily evaluated graph with controlled sparsity is precomputed for a single manipulator. 
Second, we propose the Lazy Edged-based A* (LEA*) for efficient SAPF.
Since edge evaluation is the computational bottleneck of manipulator planning,
LEA* uses lazy search and an edge queue to reduce the number of edge evaluations.
We show that LEA* is optimally vertex efficient and has improved edge efficiency compared to A*.
We apply the proposed CBLS to multi-manipulator planning problems and show its superior performance by comparing it with CBS and a sampling-based algorithm, namely, RRT-Connect.

\end{abstract}

\begin{IEEEkeywords}
Path planning, Conflict-based search, Lazy search, Manipulator planning, Multi-manipulator.
\end{IEEEkeywords}

\section{Introduction}

\IEEEPARstart{F}{inding} optimal collision-free paths for multi-agent systems is an NP-hard problem, especially for multi-manipulator systems operating in cluttered environments.
Sampling-based motion planning methods such as PRM \cite{kavraki1996probabilistic}, RRT*~\cite{Karaman2011Sampling}, RRT\# \cite{Arslan2013Use}, and BIT* \cite{gammell2020batch} make the problem tractable by approximating the search space using graphs or trees. 
They can also be viewed as graph-based methods. 
Graph search is used in PRM to find resolution-optimal paths, and local graph exploitation is used in RRT* to find asymptotically optimal paths.  
PRM builds a roadmap by drawing samples to cover the search space approximately and connecting neighboring samples using edges.  
RRT* builds a rapidly exploring tree by connecting the tree to incrementally added samples.

\begin{figure}[t]
    \centering
    \includegraphics[width=0.88\columnwidth]{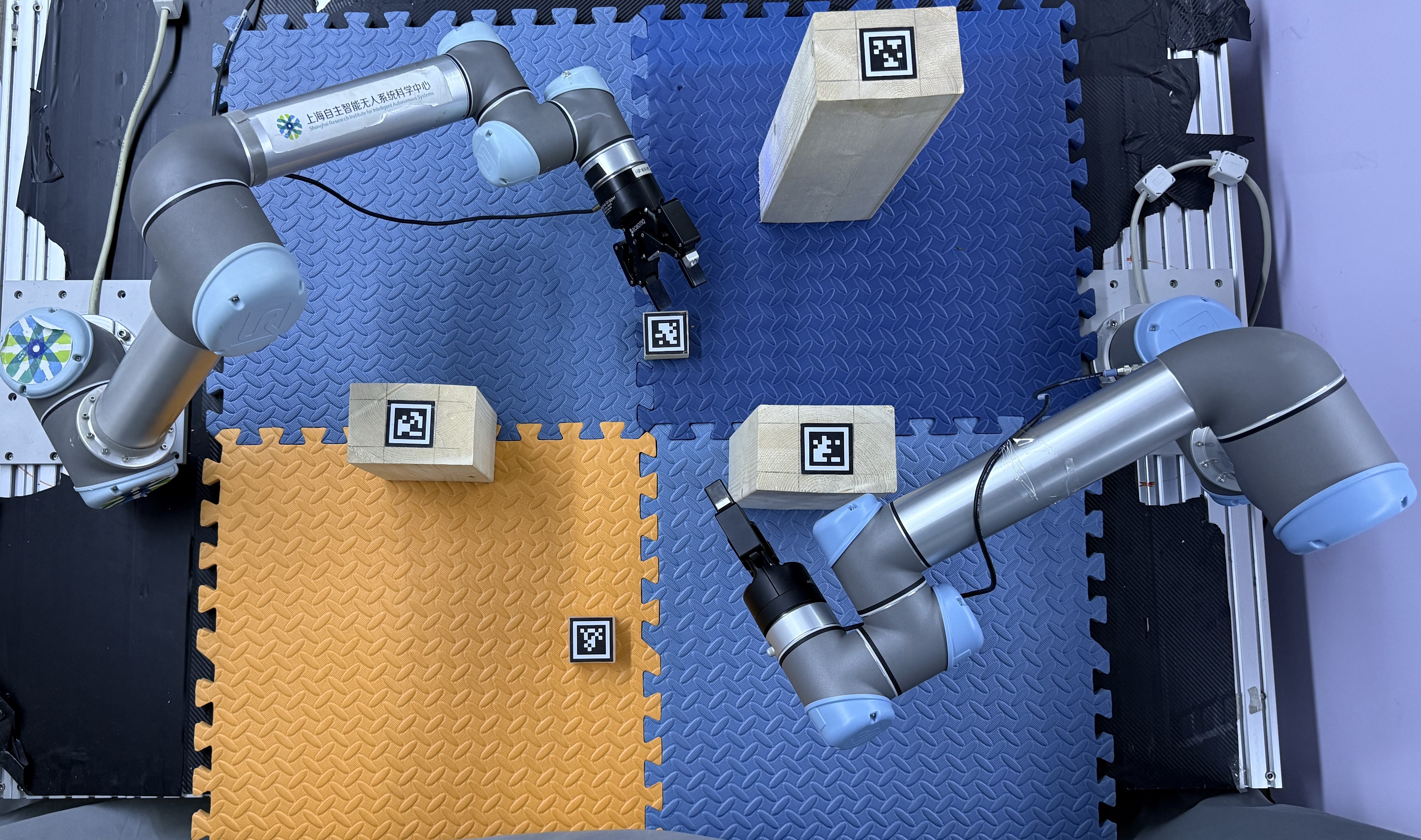}
    \caption{Manipulator planning setup. Given a valid start-goal configuration pair of the manipulators, the CBLS algorithm generates a collision-free joint angle path. A calibrated external camera is used to track the locations of the obstacles.}
    \label{Experiment_Setting}
\end{figure}

Sampling-based motion planning methods are well-suited for planning in high-dimensional spaces.
They find approximate solutions quickly by using graphs with a small number of samples. 
Earlier studies for multi-agent path planning combine the search space of each agent to form a `joint agent' and solve a single-agent planning problem~\cite{Shome2020dRRT, Cohen2014Single}.
The dimensionality of the search space is the sum of the dimensionality of each agent. 
However, the complexity of planning grows exponentially with the dimensionality of the search space.
These methods usually find solutions that are far from optimal, and the convergence rate to the optimal solution is slow.

Conflict-Based Search (CBS) \cite{Sharon2015CBS, Shaoul2024Un} is a dedicated search-based algorithm for multi-agent pathfinding (MAPF).
Instead of treating multiple agents as one `joint agent', CBS iterates between: a) independent single-agent pathfinding (SAPF); b)  conflict checking among all agent paths; and, c) conflict resolution.
CBS uses a two-level scheme. 
At the high-level, CBS checks for any conflict among single-agent paths and resolves conflicts by adding constraints.
At the low-level, single-agent paths consistent with the constraints are computed. 

CBS requires an existing graph for graph search and usually assumes the graph is a grid world.
For multi-manipulator planning using CBS, the first step is to construct a graph. 
Implicit graphs can be defined using motion primitives (MPs)~\cite{Cohen2011Planning, Saxena2021Mani}.
MPs specify the allowable actions of the manipulator.
The manipulator moves from one vertex to its neighboring vertex by applying such a motion primitive.
Using a graph search algorithm, \cite{Cohen2014Single} finds a sequence of MPs that connect the start and goal configuration of the manipulator.
Other works, such as PRM, use an explicit graph, where graph vertices are random samples drawn in the configuration space, and neighboring vertices within a radius are connected using edges.
In this paper, we construct an explicit graph for manipulator planning, and
introduce a parameter to control the sparsity of the graph.
The proposed graph construction method generates increasingly dense graphs for various requirements.
It can be applied to different numbers of manipulators and environments with different obstacle configurations and thus is environment-independent and robot-centric.

Graph search is performed for SAPF after graph construction.
Under the CBS framework, the MAPF planning time is proportional to the SAPF planning time~\cite{Sharon2015CBS}.
Numerous algorithms have been developed for SAPF. The A* \cite{Hart1968A} algorithm is a popular algorithm that is vertex optimal, that is, any other algorithm that finds the same shortest path will expand at least as many vertices when using the same heuristic. 
The weighted-A* algorithm finds a bounded suboptimal solution with an inflated heuristic cost \cite{Cohen2011Planning}.

One practical issue with robot path planning is that the edge costs may be unknown when starting the graph search. 
For example, in unknown environments, it is not known whether the edges are in collision with obstacles or not.
The procedure of computing the cost of an edge is called \textit{edge evaluation}.
Edge evaluations are performed online as part of the graph search algorithm to compute the edge cost (e.g., perform collision checking).
Even if the environment is known before starting the graph search, evaluating all edges before starting the graph search is unnecessary, as only part of the edges will be visited when searching for the solution. 
Thus, performing edge evaluation online saves time compared to evaluating all edges before graph search.
In many robotic motion planning problems, edge evaluation (collision checking) is the major computational bottleneck \cite{LaValle2006Planning}. 
Lazy search algorithms that aim to reduce the number of edge evaluations have been developed \cite{Cohen2015Planning, Dellin2016A}. 
Lazy search algorithms adopt a heuristic edge cost. 
Such a cost provides a lower bound of the true edge cost but is easier to compute. 
This heuristic cost is used to guide the graph search so that edge evaluation is done only when necessary.
LazySP is proven to be optimal in the sense that minimizes edge evaluations~\cite{Mandalika2018Lazy}.

Lazy search algorithms reduce edge evaluations at the expense of extra graph operations (vertex expansion, calculating the current best plan), which introduce extra computational overhead compared to A*.
This paper proposes a new efficient graph search algorithm, namely, LEA*, for SAPF. 
LEA* is vertex optimal and has improved edge efficiency compared to A*. 
It introduces less computational overhead compared to other lazy search algorithms.

The contributions of this paper are summarized as follows:
\begin{itemize}
    \item We propose CBLS, a new MAPF algorithm focusing on multi-manipulator path planning.
    CBLS enhances CBS with a precomputed, lazily evaluated graph and a new SAPF algorithm using lazy search. 
    
    \item We propose the LEA* algorithm for solving SAPF problems. LEA* uses lazy search and an edge queue to reduce the number of edge evaluations. 
    We show the completeness, optimality, and optimal vertex efficiency of LEA*. We also show the improved edge efficiency over the A* algorithm through numerical comparisons.
    
    \item A lazily-evaluated graph for efficient online multi-manipulator planning. A joint space graph with controlled sparsity is precomputed for manipulator planning. 
    
    \item Detailed simulation and experiment studies of the proposed algorithm are conducted to compare LEA* with previous SAPF algorithms. 
    We also compare CBLS with CBS and RRT-connect~\cite{Kuffner2000RRT-connect} demonstrating CBLS's superior performance. 
\end{itemize}

\section{Related Works} \label{SecRelatedWorks}

\subsection{Lazy Search for SAPF}

Graphs offer a powerful abstraction tool for robot path and motion planning.
When combined with sampling-based methods, algorithms such as PRM* \cite{Karaman2011Sampling}, RRT* \cite{Karaman2011Sampling, Jiang2022Path}, and BIT* \cite{gammell2020batch} can solve planning problems in high-dimensional spaces.
They construct an explicit graph of robot configurations and find the shortest path by exploiting this graph.
Implicit graphs may be defined using state lattices or motion primitives \cite{Likhachev2009Planning, Pivtoraiko2009Differentially, Liu2017Search}. Motion primitives are precomputed and beneficial in dealing with kinematic, nonholonomic, and differential constraints. 
Planning problems for ground vehicles and micro aerial vehicles are studied in \cite{Pivtoraiko2009Differentially} and \cite{Liu2017Search}, respectively. 

The A* algorithm is a popular search algorithm that is vertex optimal \cite{Hart1968A} but may lead to excessive edge evaluations.
Recent studies show that the number of edge evaluations can be reduced by employing a lazy approach \cite{bohlin2000path, Cohen2015Planning, Dellin2016A, Hauser2015Lazy}. 
For instance, LWA* \cite{Cohen2015Planning} uses a one-step lookahead to postpone edge evaluation and uses duplicated vertices in the vertex queue. First, a valid vertex with an estimated cost is popped from the vertex queue. Then, after edge evaluation, it is inserted into the queue again with the true cost.
LazySP \cite{Dellin2016A} uses an infinite-step lookahead and is shown to be edge optimal (evaluating the minimum number of edges). 
LRA* \cite{Mandalika2018Lazy} interpolates between LWA* and LazySP and uses a constant lookahead in the interval $[1, \infty]$.
As the lookahead steps increase, the number of edge evaluations required to find the shortest path decreases, while the additional graph operations increase. 
These lazy search algorithms achieve better edge efficiency at the cost of considerable overhead compared to A*. 

In \cite{Mandalika2019Generalized}, the generalized lazy search algorithm (GLS) that unifies LWA*, LazySP, and LRA* was proposed.
GLS introduces an EVENT function and a SELECTOR function.
It admits various realizations through different choices of EVENT and SELECTOR functions.
With specific EVENT and SELECTOR instances, GLS reduces exactly to the LazySP or LRA* algorithm.
GLS may employ informed EVENT and SELECTOR functions to improve the planning efficiency of LazySP/LRA*.
However, this requires additional prior information (e.g., priors on edge validity learned from experience), which may be unavailable or may deviate from the true edge collision probabilities.
A lazy incremental algorithm dealing with dynamically changing graphs was introduced in \cite{Lim2022Lazy} by combining the idea of lazy search and lifelong planning \cite{Koenig2004Lifelong}. 

\subsection{MAPF and Conflict-Based Search}

To solve MAPF problems, earlier works search the augmented space of all agents.
The single and dual-arm planning problem was studied in~\cite{Cohen2014Single}. 
It uses the ARA* algorithm to search an implicit graph defined by motion primitives.
By combining individual agents as a single ‘joint agent’, the dimensionality of planning space becomes too large.
These methods may find solutions that are far from the optimal one, and the convergence rate to the optimal solution is slow.

Conflict-based search (CBS)~\cite{Sharon2015CBS} uses a two-level planning paradigm. 
It uses the fact that SAPF is much simpler than MAPF.
The low-level of CBS solves many SAPF problems and uses high-level planning to resolve conflicts between agents. 
Several enhancements of CBS have been proposed where admissible heuristics are developed to guide the high-level search \cite{Boyarshi2015ICBS, Li2019Improved}.
Disjoint splitting for conflict resolution was proposed in \cite{Li2019Disjoint} to reduce duplication of search efforts.
The CBS-Budget (CBSB) algorithm was proposed in \cite{Lim2022CBSB}. A Class-Ordered A* algorithm, which finds the shortest path with a minimal number of conflicts that is upper bounded in terms of length, is used for the low-level planning of CBSB.
Continuous-time Conflict-Based Search (CCBS) was proposed in~\cite{Andreychuk2022Multi} to remove time discretization, where Safe Interval Path Planning (SIPP) was used for single-agent pathfinding in CBS.

All methods mentioned above consider a grid world. 
Multi-robot kinodynamic motion planning was studied in \cite{Moldagalieva2024db} based on CBS and discontinuity-bounded A*. The proposed method is used for multiple vehicle planning. 
Multi-manipulator planning was studied in \cite{Shaoul2024Un}. Incomplete constraints that allow fast search space pruning are introduced to prune the search space of CBS. 
The resulting generalized ECBS algorithm obtains bounded sub-optimality.
Note that the method proposed in this paper and \cite{Shaoul2024Un} are complementary.
Reference \cite{Shaoul2024Un} uses incomplete constraints as a heuristic to reduce the number of SAPF problems needed by CBS, and uses weighted-A* for SAPF \cite{Cohen2011Planning}, while our method mainly focuses on more efficient SAPF methods. 
Our paper uses complete constraints (vertex and edge constraints); incomplete constraints could be integrated following \cite{Shaoul2024Un}.

\subsection{Manipulator Planning}

Manipulator motion planning for human-manipulator interaction and collaboration is studied in \cite{Liu2024Integrating}, where human motion prediction is utilized for proactive manipulator planning.
In \cite{Jang2022Motion}, the multi-arm collaborative pick-and-place task is studied.
The closed-chain constraint is considered using a probabilistic roadmap algorithm. The task considers a relatively structured environment, reducing the complexity of obstacle avoidance. 
Prioritized planning is used in \cite{Shi2022Obstacle}, where a dual arm is divided into a main arm and a slave arm. 
Based on the RRT algorithm, the main arm planning is performed first, then the main arm is treated as a dynamic obstacle with a known trajectory in the slave arm planning.
An asymptotically optimal multi-robot motion planning algorithm was proposed in \cite{Shome2020dRRT}, where an informed search is performed on the tensor product of roadmaps.
Multi-manipulator planning using CBS is studied in \cite{shaoul2024Acc}, where online-generated experience is exploited to reduce single-manipulator planning time at the low level of CBS.

\section{Problem Fromulations} \label{Sec:Problem}
Given $n$ manipulators working in a shared environment, we label them as $a_1, \ldots, a_n$.
Let the joint angles of manipulator $i$ at time $k$ be $j_{i,k}$, where $j_{i,k}$ is a vector with dimension equal to the number of joints of manipulator $i$.
The set of joint angles for all $n$ manipulators at time $k$ is given by $J_k = \{j_{1,k}, j_{2,k}, \ldots, j_{n,k}\}$. The set $J_k$ is a \textit{valid joint angle set} if there is no manipulator self-collision, manipulator-obstacle collision, or manipulator-manipulator collision at time $k$.
A joint angle path for all $n$ manipulators $P = (J_1, \ldots, J_K )$ is a \textit{valid joint angle path} if the joint angle set $J_k$ is valid for all time $k = 1, \ldots, K$ along the path.

The multi-manipulator planning problem is defined as finding a valid path $P$ from a start joint angle set $J_s$ to a goal set $J_g$.
The cost of $P$ is the sum of all individual agent path lengths.
We adopt conflict-based search (CBS) \cite{Sharon2015CBS} to solve the multi-manipulator planning problem.
CBS is a two-level algorithm.
The low-level solves a series of SAPF problems and finds optimal paths for the individual agents.
For each agent $a_i$, SAPF involves searching a precomputed graph to find a valid shortest length path $p_i = \{j_{i,1}, \ldots, j_{i,K}\}$ from the start $j_{i,1}$ to goal $j_{i,K}$.
Each path is free from manipulator self-collisions and manipulator-obstacle collisions, but overlooks manipulator-manipulator collisions.
The high-level planner checks for conflicts among all individual paths. 
Any manipulator-manipulator collision results in a conflict.
If no conflict is found, a valid path has been found.
Otherwise, the conflict is split into new constraints.
Low-level SAPF problems with the new constraints are then solved.

We use lazy search to solve the low-level SAPF. 
Collision checking is one of the computational bottlenecks of manipulator planning. 
To reduce the total number of collision checks, lazy search methods use an edge cost heuristic before computing the true edge cost.
A graph $G$ is given by a vertex set $V$ and an edge set $E$.
We define the edge cost heuristic $\hat{w}(e)$ for an edge $e$ as the length of the edge.
The true edge cost $w(e)$ is given by
\begin{equation}
    w(e) = \begin{cases} \hat{w}(e),  \ \quad \text{if $e$ is not in collision}, \\ 
    \enspace \infty,  \ \ \ \quad \text{if $e$ is in collision}.
    \end{cases}
\end{equation}

In the following sections, we first summarize our method for constructing a graph for manipulator planning.
Then, the lazy search algorithm along with its properties is introduced.
Finally, we combine the two components with CBS to form the conflict-based lazy search (CBLS) algorithm.

\section{Lazily Evaluated Sparse Graph} \label{Sec:Graph}

CBLS uses a precomputed graph for online pathfinding.
Without loss of generality, we consider the path planning problem for several identical manipulators.
To deal with different manipulators, we need to compute one graph for each manipulator model. 

The vertex sampling method is given in Algorithm \ref{alg:SparseSampling}.
In Line 4, we draw a sample from the joint angle space of one manipulator using uniform random sampling.
This step is repeated until a self-collision-free sample $v$ is found (Lines 5-6).
We introduce the parameter $\textit{m\_dist}$ as the lower-bound distance between sampled vertices.
The $\mathsf{mindist}$ function returns the minimum distance $d$ between $v$ and all other vertices in $V$ (Line 7).
Vertex $v$ is rejected if $d$ is less than $\textit{m\_dist}$ (Lines 8-10).
Otherwise, $v$ is added to the vertex set $V$. 
We use $\textit{count}$ to keep track of the number of consecutive rejected vertices. 
In Line 3, the algorithm terminates if a maximum sample number $N$ is reached or $\textit{count}$ reaches $\textit{max\_count}$.

%%%%%%%%%%%%%%%%%%%%%%%%%%%%%%%%%%%%%%%%%%%%%%%%%%%%%%%%%%%%%%%%%%%%%%%%%%%
\IncMargin{.5em}
\begin{algorithm}
\caption{SparseSampling}
\label{alg:SparseSampling}
\KwIn{parameters $N$, $\mathrm{max\_count}$, $\mathrm{m\_dist}$}
\KwOut{vertex set $V$}
$V \leftarrow \emptyset$\;
$\mathrm{count} \leftarrow 0$\;
\While{$ V.\mathsf{size} < N \ \mathbf{and} \ \mathrm{count} < \mathrm{max\_count}$}
{   
    $v \leftarrow \mathsf{Sampling}$\;
    \If{$\mathsf{selfCollision}(v)$} 
    {
        $\mathsf{continue}$\;
    }
    $d \leftarrow \mathsf{mindist}(V, v)$\;
    \If{$d < \mathrm{m\_dist}$} 
    {
        $\mathrm{count} = \mathrm{count} + 1$\;
        $\mathsf{continue}$\;
    }
    $\mathrm{count} \leftarrow 0$\;
    $V \leftarrow V \cup \{v\}$
}
$\mathsf{return} \ V$\;
\end{algorithm}
\DecMargin{.5em}
%%%%%%%%%%%%%%%%%%%%%%%%%%%%%%%%%%%%%%%%%%%%%%%%%%%%%%%%%%%%%%%%%%%%%%%%%%%

The parameter $\textit{m\_dist}$ controls the sparsity of the vertex set, where all vertices are at least $\textit{m\_dist}$ away from each other.
Starting with a large $\textit{m\_dist}$, we use a small number of samples to cover the joint angle space of the manipulator. 
Using the most recent vertex set $V$ and gradually reducing $\textit{m\_dist}$, we obtain an incrementally denser graph using Algorithm \ref{alg:SparseSampling}, where the vertices are still approximately equally spaced.
Algorithm \ref{alg:SparseSampling} is only for vertex sampling.
To construct a graph, we connect each vertex to its neighbors following the PRM* algorithm.

We check self-collisions during graph construction, while manipulator–obstacle and manipulator–manipulator collisions are handled via online lazy search.
Because specific obstacles are not considered during graph construction, the resulting graph is environment-independent and can be reused across environments with different obstacle configurations.
We exploit offline computation and online lazy search to reduce online computation.
Other online algorithms, such as RRT* and motion primitive-based algorithms, also need to perform online self-collision checking.
Previous MAPF formulations often assume a grid world that is fixed to a global frame, and all agents move within this grid world.
In contrast, in our manipulator planning setting, the graph is agent-centric.
Each manipulator has its own graph. Two manipulators will occupy different workspaces even if they have the same joint angles.

\section{The LEA* Algorithm for SAPF} \label{LEA*}

In this section, we provide a detailed description of the proposed lazy edge-based A* (LEA*) algorithm.
LEA* is an efficient algorithm for SAPF, which, in turn, helps reduce the planning time for CBS.
LEA* uses lazy search and edge queue for tree expansion.
We design LEA* to have minimal changes compared to the original A* algorithm, thus having restrained computational overhead, by searching the lazily evaluated graph (Section \ref{Sec:Graph}).

Similar to A*, we define the \textit{cost-to-come} and the heuristic \textit{cost-to-go} of the vertex $v$ as $g(v)$ and $h(v)$, which are the cost from $v_s$ to $v$ given the current search tree and the estimated cost from $v$ to $v_g$, respectively. Here, $h(\cdot)$ is an admissible and consistent heuristic.
The total estimated cost of the path passing through $v$ is given by 
\begin{equation}
    f(v) = g(v) + h(v).
    \label{f_v}
\end{equation}
An edge is given by $e = (v_0, v_1)$, where $v_0$ is the source vertex and $v_1$ is the target vertex.
We define the total estimated cost of the path passing through edge $e$ as
\begin{equation}
    f(e) = g(v_0) + \hat{w}(e) + h(v_1).
    \label{f_e}
\end{equation}

\begin{figure*}[t]
\begin{center}
    \begin{subfigure}{0.26\linewidth}{
    \includegraphics[width=\textwidth]{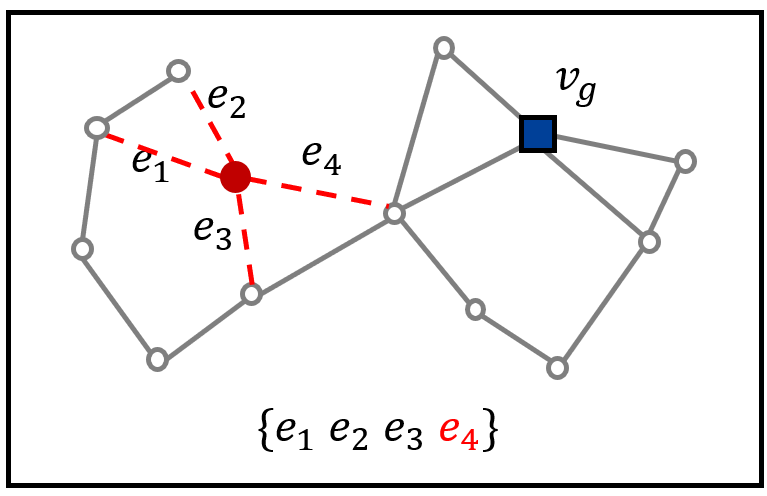}
    \caption{}
    }
    \end{subfigure}
    \begin{subfigure}{0.26\linewidth}{
    \includegraphics[width=\textwidth]{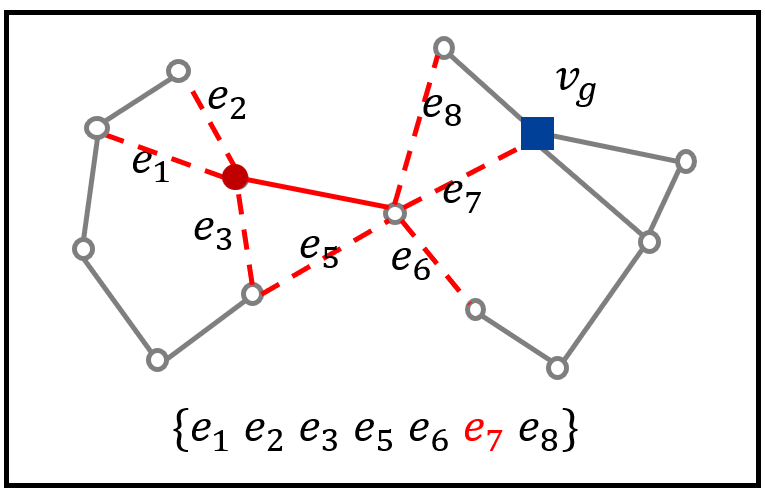}
    \caption{}
    }
    \end{subfigure}
    \begin{subfigure}{0.26\linewidth}{
    \includegraphics[width=\textwidth]{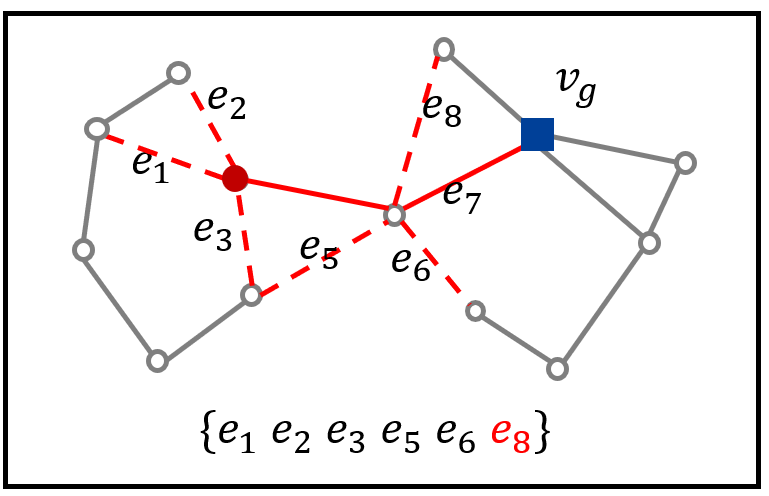}
    \caption{}
    }
    \end{subfigure}
    \begin{subfigure}{0.26\linewidth}{
    \includegraphics[width=\textwidth]{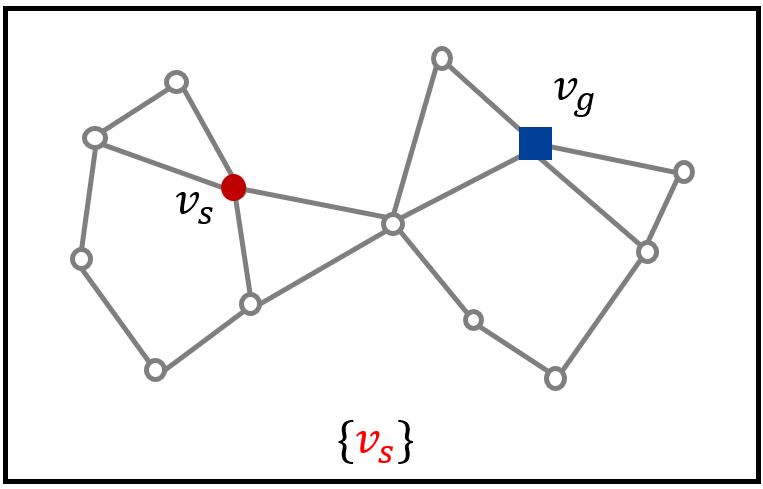}
    \caption{}
    }
    \end{subfigure}
    \begin{subfigure}{0.26\linewidth}{
    \includegraphics[width=\textwidth]{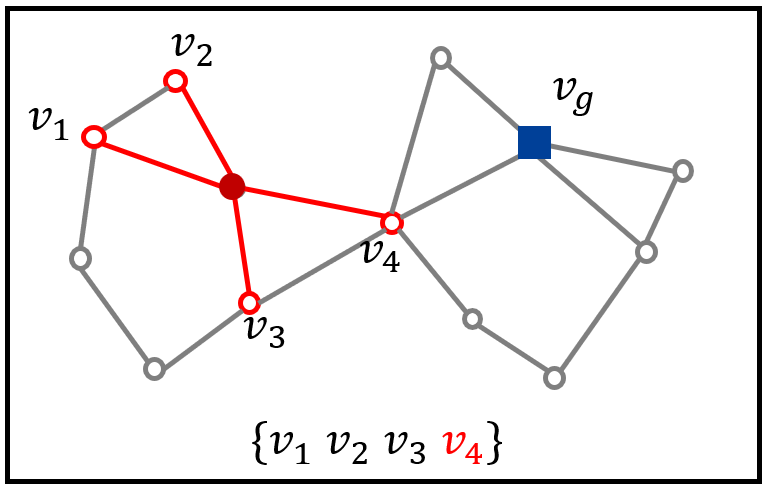}
    \caption{}
    }
    \end{subfigure}
    \begin{subfigure}{0.26\linewidth}{
    \includegraphics[width=\textwidth]{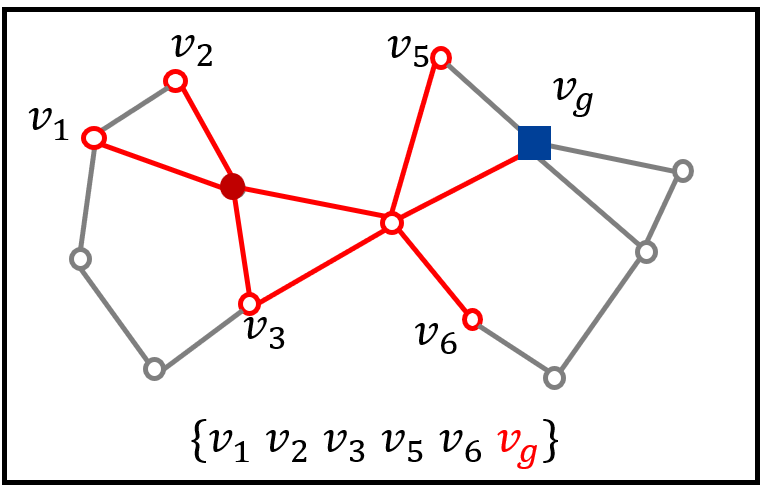}
    \caption{}
    }
    \end{subfigure}
    \caption{Illustration of the steps of LEA* ((a)-(c)) and A* ((d)-(f)). The set below each figure is the current edge queue or vertex queue. Dashed red lines are edges that are lazily evaluated. Solid red lines are evaluated edges. In (a), four edges are added to the edge queue, and $e_4$ is selected for evaluation. In (b), $e_4$ is evaluated, the next edges are added to the edge queue, and $e_7$ is selected. By using lazy evaluation and an edge queue, LEA* only evaluated two edges while A* evaluated eight edges.}
  \label{fig:LEA*illustration}
\end{center}
\end{figure*}

%%%%%%%%%%%%%%%%%%%%%%%%%%%%%%%%%%%%%%%%%%%%%%%%%%%%%%%%%%%%%%%%%%%%%%%%%%%
\IncMargin{.5em}
\begin{algorithm}
\caption{LEA*}
\label{alg:LEA*}
\KwIn{start $v_s$, goal $v_g$ for one arm}
\KwOut{planned path for one arm}
$Q_E \leftarrow \{(v_s, v_i)| \forall v_i \in \mathsf{Succ}(v_s) \}$\;
\While{$Q_E \neq \emptyset$}
{   
    $(v_0, v_1) \leftarrow Q_E.\mathsf{Pop}$\;
    \If{$g(v_g) \leq f((v_0, v_1))$}
    {
        $\mathsf{return} \ \mathsf{RetrievePath}$
    }
    % \If{$(v_0, v_1) \in c$}
    % {
    %     $\mathsf{continue}$
    % }
    $\mathrm{collisionFree} \leftarrow \mathsf{CollisionCheck}((v_0, v_1))$\;
    \If{$\mathrm{collisionFree}$}
    {
        $g_\mathrm{new} \leftarrow g(v_0) + w((v_0, v_1))$\;
        \If{$g_\mathrm{new}<g(v_1)$}
        {
            $g(v_1) \leftarrow g_\mathrm{new}$\;
            $v_1.\mathrm{parent} \leftarrow v_0$\;
            $Q_E \leftarrow Q_E \cup \{(v_1, v_i)|\forall v_i \in \mathsf{Succ}(v_1) \}$\;        
        }
    }
}
\end{algorithm}
\DecMargin{.5em}
%%%%%%%%%%%%%%%%%%%%%%%%%%%%%%%%%%%%%%%%%%%%%%%%%%%%%%%%%%%%%%%%%%%%%%%%%%%

The LEA* algorithm is given in Algorithm \ref{alg:LEA*}.
In Line~1, the edges from the starting vertex $v_s$ to its successors $v_i$ are added to the edge queue $Q_E$.
The edges in $Q_E$ are ordered by their $f$-values given by equation (\ref{f_e}).
In Line 3, the best edge $e=(v_0, v_1)$ with the smallest $f$-value is popped from $Q_E$. 
The $f$-value is used in the termination condition in Line 4.
If the cost-to-come of goal vertex $g(v_g)$ is less than or equal to $f((v_0, v_1))$, we have found the shortest path.
We can retrieve the shortest path from $v_s$ to $v_g$ using the parent information of the vertices.
Collision checking is performed in Line 6. 
If the edge is collision-free, i.e., no manipulator self-collision or manipulator-obstacle collision, the new cost-to-come of $v_1$, $g_\mathrm{new}$, is computed in Line 8.
If $g_\mathrm{new}$ is better than the current cost-to-come $g(v_1)$, $g(v_1)$ and the parent of $v_1$ are updated (Line 9-11).
Finally, in Line 12, the edges from $v_1$ to its successors $v_i$ are added to the edge queue $Q_E$.

A simple illustration of the LEA* and its difference with the A* algorithm is given in Figure~\ref{fig:LEA*illustration}.
Figures~\ref{fig:LEA*illustration}(a)-(c) are the steps of LEA* and Figures~\ref{fig:LEA*illustration}(d)-(f) are the steps of A*.
At the beginning of LEA*, four edges are added to the edge queue as shown in Figure~\ref{fig:LEA*illustration}(a). 
Note that these edges are only lazily evaluated using the estimated cost $\hat{w}$, and $\hat{w}$ is used to compute their $f$-values according to (\ref{f_e}). 
The best edge $e_4$ is selected for edge evaluation. 
In Figure~\ref{fig:LEA*illustration}(b), $e_4$ is evaluated using collision checking, and the next edges are added to the queue. 
Repeating this process, the best edge $e_7$ in the current queue is selected for evaluation.
For A*, all outgoing edges are evaluated when expanding a vertex.
In this example, by using lazy evaluation and an edge queue, LEA* only evaluated two edges to find the solution, while A* evaluated eight edges.

\subsection{Algorithm Analysis}

In this subsection, we analyze the completeness, optimality, and vertex efficiency of LEA*. We first show that LEA* is complete.
\begin{proposition}
If at least one solution exists for the single-agent graph search problem, LEA* will return a solution. Otherwise, it will return that no solution exists.
\end{proposition}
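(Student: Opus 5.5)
The plan has two parts: show that LEA* always terminates on the finite precomputed graph, and show that if it runs until the queue $Q_E$ is empty then no solution exists. Conversely, if a solution exists, the goal receives a finite cost-to-come before the queue can empty, and the test in Line~4 then fires. I assume the usual conventions: $g(v)=\infty$ for every vertex except $g(v_s)=0$, $\mathsf{RetrievePath}$ is only meaningful when $g(v_g)<\infty$, and exiting the while loop means ``no solution''.

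\textbf{Termination.} Edges enter $Q_E$ only in Line~1 or Line~12. Line~12 runs only when $g(v_1)$ strictly decreases. Each finite value of $g(v_1)$ is the cost of a simple path in the search tree from $v_s$ to $v_1$, because parent pointers always point to vertices with finite $g$ and edge costs are positive. The graph is finite, so there are finitely many simple paths and hence finitely many distinct values $g(v_1)$ can take. Every vertex is therefore updated finitely often, finitely many edges are ever pushed, and each iteration pops one edge. So the loop ends, either at Line~5 or because $Q_E$ becomes empty.

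\textbf{Existence of a solution.} Let $\pi=(v_s=u_0,u_1,\dots,u_K=v_g)$ be a collision-free path. I will show by induction on $k$ that $g(u_k)<\infty$ at some point before $Q_E$ empties. For $k=0$ this is immediate, and the edge $(u_0,u_1)$ is in $Q_E$ from Line~1. For the inductive step, suppose $g(u_k)$ first becomes finite at some iteration. That update runs Line~12, which pushes $(u_k,u_{k+1})$. If the loop has not already returned, this edge is eventually popped, since every pop removes an element and the queue can only empty after all of them are gone. The edge is collision-free, so Line~8 gives $g_\mathrm{new}<\infty$. Then either $g(u_{k+1})$ was already finite or it becomes finite now. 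By induction $g(v_g)$ becomes finite before the queue empties, unless the algorithm has already returned at Line~5.

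\textbf{Return at Line~5 is valid.} I must rule out the case where the algorithm stops while $Q_E$ is nonempty but with $g(v_g)=\infty$. The test $g(v_g)\le f(e)$ cannot hold with $g(v_g)=\infty$ while $f(e)<\infty$. This needs the side condition that $f$-values of queued edges are finite: each queued source has finite $g$, $\hat w$ is the finite edge length, and $h$ is finite. So Line~5 only fires with $g(v_g)<\infty$, and the parent pointers then trace a genuine collision-free path. Every parent link was set only after a successful collision check, and following a parent chain strictly decreases $g$, so it cannot cycle.

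\textbf{Combining the cases.} If a solution exists, the algorithm cannot reach an empty queue without having returned, so it returns a solution. If no solution exists, $g(v_g)$ never becomes finite, Line~5 never fires, and by termination the queue empties, so the algorithm reports failure. The main obstacle is the termination argument, specifically bounding how often a vertex can be re-opened through strict decreases of $g$ on a finite graph. I would handle it with the finiteness of simple paths described above.
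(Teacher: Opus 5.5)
Your route is the same as the paper's: finitely many improvements of $g$ give termination, and you then follow a solution path edge by edge. In two places you are more careful than the paper. First, you bound the number of attainable $g$-values, whereas the paper's ``decreasing and lower bounded'' does not by itself exclude infinitely many decreases. Second, you push $(u_k,u_{k+1})$ at the first finite assignment of $g(u_k)$, rather than ``after evaluating $e_k$''.

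The gap is in your final step. Under your convention that leaving the while loop means ``no solution'', you must show that Line~5 is actually reached. Your induction only shows that $g(v_g)$ becomes finite during some iteration. It does not show that a later pop occurs, before $Q_E$ empties, at which $g(v_g)\le f(e)$ is tested. That can fail. Take a directed graph consisting of the single collision-free edge $(v_s,v_g)$:
\begin{itemize}
\item LEA* pops the edge, and the Line~4 test fails because $g(v_g)=\infty$.
\item It sets $g(v_g)=\hat{w}((v_s,v_g))$.
\item Line~12 pushes nothing, since $\mathsf{Succ}(v_g)=\emptyset$.
\item The loop exits without ever reaching Line~5.
\end{itemize}
So ``the algorithm cannot reach an empty queue without having returned'' does not follow from what you proved.

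The missing ingredient is that $\mathsf{Succ}(v_g)\neq\emptyset$ once $v_g$ is reached. This holds for the paper's undirected PRM*-style roadmap, since $(u_{K-1},v_g)$ being an edge gives the edge $(v_g,u_{K-1})$. The update that first makes $g(v_g)$ finite then runs Line~12 and pushes some $(v_g,v_i)$ with $f((v_g,v_i))=g(v_g)+\hat{w}((v_g,v_i))+h(v_i)\ge g(v_g)$. Because $g(v_g)$ only decreases afterwards, the Line~4 test succeeds at the latest when this edge is popped, and that must happen before $Q_E$ can empty. The paper sidesteps the issue only implicitly, by treating termination with $Q_E=\emptyset$ and $g(v_g)<\infty$ as having found the solution.

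There is also a smaller imprecision in your termination argument. A finite $g(v_1)$ need not be the cost of a path in the \emph{current} search tree, since ancestors' $g$-values can drop after $v_1$ is set without $g(v_1)$ being updated. What does hold is that $g(v_1)$ equals the cost of the chain of assignments that produced it. That chain is a simple path: each vertex's $g$ strictly decreases over time, whereas returning to a vertex around a positive-cost cycle would produce a larger value. With this justification, your finiteness argument goes through.
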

\begin{proof}
We first show that the algorithm terminates with $Q_E = \emptyset$ and $g(v_g) = \infty$ when no solution exists.
All edges have positive edge costs. The new edge $(v_1, v_i)$ is added to $Q_E$ in Line 12, Algorithm~\ref{alg:LEA*}, only when we find a better path to $v_1$ (Line 9). Since the cost-to-come to every vertex is decreasing and lower bounded, Line 12 will only run a finite number of iterations. Thus, $Q_E = \emptyset$ after executing Line 3 a finite number of iterations. Therefore, LEA* will terminate with $Q_E = \emptyset$ and $g(v_g) = \infty$ if no solution exists.

Next, we consider the case when the problem has a solution.
LEA* terminates when $g(v_g) \leq \mathrm{min}_{e \in Q_E} f(e)$ (Line 4) or $Q_E = \emptyset$ (Line 2).
If the algorithm terminated with $g(v_g) \leq \mathrm{min}_{e \in Q_E} f(e)$, we have $g(v_g) < \infty$, which implies a solution has been found.

Let $(e_0,e_1, \dots, e_{n-1})$ (equivalently, $(v_s, v_1, v_2, \dots, v_g)$) be a solution path. 
Initially, $e_0 \in Q_E$. 
After evaluating $e_0$, $e_1$ is added to $Q_E$. 
Similarly, after evaluating $e_1$, edge $e_2$ is added to $Q_E$.
If the algorithm terminateds with $Q_E = \emptyset$, it must have evaluated $e_0,e_1, \dots, e_{n-1}$. Thus, the algorithm has found this solution, and the returned solution is at least as good as $(e_0,e_1, \dots, e_{n-1})$.  
\end{proof}

Next, we prove the optimality properties of LEA*.
\begin{proposition}
If at least one solution exists for the graph search problem, LEA* finds the minimum cost solution.
\end{proposition}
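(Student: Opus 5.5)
The plan is to reduce the statement to a single queue invariant and then read optimality off the termination test in Line~4. By the previous proposition, LEA* returns a path whenever a solution exists. It therefore suffices to show that, when LEA* returns through Line~4, the returned cost $g(v_g)$ equals the optimal cost $c^*$. Termination through $Q_E=\emptyset$ is the easier case and is handled at the end.

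\textbf{Step 1 (upper bound).} Whenever $g(v_g)<\infty$, the parent pointers trace an actual collision-free path of cost $g(v_g)$. Hence $g(v_g)\ge c^*$ at all times.

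\textbf{Step 2 (invariant, the main obstacle).} Fix an optimal path $(v_s=u_0,u_1,\dots,u_m=v_g)$. I will prove that, as long as $g(v_g)>c^*$, the queue $Q_E$ contains an edge $(u_i,u_{i+1})$ of this path with $g(u_i)=g^*(u_i)$, where $g^*$ denotes the optimal cost-to-come.

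Take the largest $i$ with $g(u_i)=g^*(u_i)$. Such an $i$ exists because $g(v_s)=0$ (taking $g(v_s)=0$ and all other $g$ initialized to $\infty$). Also $i<m$, since otherwise $g(v_g)=c^*$.

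\emph{Claim:} when $g(u_i)$ was last set to its final value, or at initialization if $i=0$, the edge $(u_i,u_{i+1})$ was inserted into $Q_E$ with key $g^*(u_i)+\hat w+h(u_{i+1})$. Suppose that edge were popped afterwards. Since $g(u_i)$ is never lowered again, it would be evaluated with that same $g$-value. The edge is collision-free, so the relaxation would give $g(u_{i+1})\le g^*(u_{i+1})$, contradicting the maximality of $i$.

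There is one subtlety. The key $f(e)$ uses the current $g(v_0)$, so several copies of an edge may sit in the queue with stale keys. I will treat each queue entry as carrying the $g(v_0)$ value at insertion time, and argue that a stale entry carries an older, larger $g$. I also need to check that popping the stale copy first does no harm: the relaxation uses the current $g(v_0)$ anyway, so that only helps. I expect the careful handling of these duplicate entries to be the main technical point.

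\textbf{Step 3 (conclusion).} The witness edge has
\[
f = g^*(u_i)+\hat w(u_i,u_{i+1})+h(u_{i+1}) \le g^*(u_i)+w(u_i,u_{i+1})+h^*(u_{i+1}) = c^*,
\]
using $\hat w=w$ on collision-free edges and the admissibility of $h$. Hence $\min_{e\in Q_E}f(e)\le c^*$ while $g(v_g)>c^*$. So the test in Line~4 cannot succeed before $g(v_g)=c^*$. Combined with Step~1, the returned path has cost exactly $c^*$.

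If the algorithm instead ends with $Q_E=\emptyset$, the invariant shows this can only happen once $g(v_g)=c^*$, since otherwise the witness edge would still be in the queue. Therefore the returned path is optimal in either case.
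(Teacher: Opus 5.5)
Your proof is correct and follows the same idea as the paper's: every edge of an optimal path has $f\le c^*$ once its source vertex carries its optimal cost-to-come, so the test in Line~4 cannot succeed before $g(v_g)$ has dropped to $c^*$. Your maximal-$i$ invariant is a more careful version of the paper's edge-by-edge induction along $\tau^*$. It makes explicit two points the paper passes over: the bound $f(e_i^*)\le c^*$ requires the source's $g$-value to already be optimal, and $e_{i+1}^*$ may have been queued earlier rather than immediately after $e_i^*$ is evaluated.
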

\begin{proof}
Let $\tau^* = (v_s, v_1^*, v_2^*, \dots, v_{n-1}^*, v_g)$ (equivalently $(e_0^*, e_1^*, \dots, e_{n-2}^*, e_{n-1}^*)$) be an optimal solution path with path cost $c^*$. 
Let $\tau = (v_s, v_1, v_2, \dots, v_{n-1}, v_g)$ (equivalently $(e_0, e_1, \dots, e_{n-2}, e_{n-1})$) be the solution path returned by LEA*. 
The cost of $\tau$ is $c$. 
To prove optimality, we need to show that $c$ is equal to $c^*$.

To reach a contradiction, let us assume $c^* < c$.
Note that $\max\{f(e_0^*), f(e_1^*), \dots, f(e_{n-2}^*), f(e_{n-1}^*)\} \leq c^*$. 
Also,  $g(v_g) = c \leq \mathrm{min}_{e \in Q_E} f(e)$ holds when LEA* terminates.
At the beginning of LEA*, $e_0^* \in Q_E$. Therefore, $e_0^*$ must have been evaluated before $c \leq \mathrm{min}_{e \in Q_E} f(e)$ is true. 
After evaluating $e_0^*$, the vertex $v_1^*$ is added to the expansion tree, and edge $e_1^*$ is added to $Q_E$. 
By repeating this analysis, $e_1^*, \dots, e_{n-2}^*, e_{n-1}^*$ must all have been evaluated before $c \leq \mathrm{min}_{e \in Q_E} f(e)$ is true.
After evaluating $e_1^*, \ldots, e_{n-1}^*$, we have found a path to $v_g$ and $f(v_g) \leq c^*$. Since the $f(v_g)$ is nonincreasing, the algorithm will never return a path with cost $c>c^*$. Therefore $c=c^*$.
\end{proof}

The next result illustrates the optimal vertex efficiency property of LEA*.
\begin{proposition}
LEA* has the same vertex efficiency as A*. Furthermore, the evaluated edge set of LEA* is a subset of the evaluated edge set of A*.    
\end{proposition}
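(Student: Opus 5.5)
The plan is to compare LEA* with A* through the classical characterization of the vertices A* expands under a consistent heuristic. Let $c^*$ be the optimal cost and $g^*(v)$ the true shortest cost-to-come in the (collision-pruned) graph. With a consistent $h$, A* expands every vertex with $g^*(v)+h(v)<c^*$, expands no vertex with $g^*(v)+h(v)>c^*$, and each expanded vertex receives its optimal $g$-value when it is popped. The edges A* evaluates are exactly the outgoing edges of the vertices it expands. Call $v$ \emph{expanded} by LEA* if some outgoing edge $(v,v_i)$ is pushed into $Q_E$ (Line 12, or Line 1 for $v_s$). The claim then has two parts: this set of vertices coincides with A*'s expanded set, up to the usual tie-breaking at $f=c^*$; and every edge popped and collision-checked by LEA* is an outgoing edge of such a vertex.

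\textbf{Step 1 (monotone pop order).} I first show that the $f$-values of edges popped from $Q_E$ are nondecreasing. Suppose $(v_0,v_1)$ is popped and found collision-free. Each newly pushed edge $(v_1,v_i)$ satisfies
\[
f((v_1,v_i)) = g(v_0)+\hat w(v_0,v_1)+\hat w(v_1,v_i)+h(v_i) \ge g(v_0)+\hat w(v_0,v_1)+h(v_1) = f((v_0,v_1)).
\]
The inequality is consistency, $h(v_1)\le \hat w(v_1,v_i)+h(v_i)$. Here $\hat w=w$ on collision-free edges, and $h$ is consistent with respect to $\hat w$, being a lower bound on distance. The standard A* argument then gives monotone pops. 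From monotonicity I derive the analogue of the ``closed vertices are optimal'' lemma: when an edge out of $v_1$ is first pushed, $g(v_1)=g^*(v_1)$.

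This step is where I expect the main obstacle. LEA* may push successors of $v_1$ several times if $g(v_1)$ later decreases. I plan to rule this out by contradiction. Suppose a better path to $v_1$ is found later. Its edges have $f$-values at most $g^*(v_1)+h(v_1)$, which is strictly less than the $f$ of the earlier pop. By the same frontier argument used in the optimality proposition above, some edge on that better path must have been in $Q_E$ with smaller $f$ and popped earlier, contradicting monotone order. This also shows LEA* re-expands no vertex, which is needed for vertex efficiency.

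\textbf{Step 2 (same expanded set).} A vertex $v_1$ gets expanded by LEA* exactly when an edge with $f=g^*(v_1)+h(v_1)$ is popped before termination, and termination occurs once $\min_{e \in Q_E} f(e)\ge g(v_g)=c^*$. Hence LEA* expands every $v$ with $g^*(v)+h(v)<c^*$. It expands none with $g^*(v)+h(v)>c^*$, since any edge into such a $v$ has $f>c^*$ and is never popped. This matches A*'s expanded set and establishes the first claim.

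\textbf{Step 3 (edge subset).} Every collision-checked edge $(v_0,v_1)$ in LEA* has $v_0$ expanded by LEA*, so $v_0$ is expanded by A* by Step 2. A* evaluates all outgoing edges of $v_0$, so $(v_0,v_1)$ lies in A*'s evaluated set. To handle ties at $f=c^*$, I fix a common tie-breaking rule favoring the goal.
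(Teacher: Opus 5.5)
Your Steps 1 and 2 are sound, apart from one slip. In Step 2, ``any edge into such a $v$ has $f>c^*$'' holds only for collision-free edges: an in-collision edge is keyed with $\hat w$ and may have key $\le c^*$. Popping such an edge never pushes successors of $v$, so your conclusion survives.

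The genuine gap is Step 3, and it is exactly the case the paper's proof works on. Steps 1--2 settle every source vertex with $g^*(v_0)+h(v_0)\neq c^*$. What remains are edges out of sources with $f^*(v_0)=c^*$, and there you only say ``fix a common tie-breaking rule favoring the goal.'' No common rule is specified, and one is not obvious, since one queue holds vertices and the other holds edges. Favoring the goal is also the wrong convention. It steers A* toward expanding as few $f=c^*$ vertices as possible, which works against an inclusion of LEA*'s edges into A*'s. It also says nothing about the ties that matter, which occur before $v_g$ has entered either queue.

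Here is a counterexample. Take unit-length collision-free edges $(v_s,a)$, $(a,v_g)$, $(v_s,x)$, $(x,y)$, with $y$ a dead end and $h(v_s)=2$, $h(a)=h(x)=1$, $h(y)=h(v_g)=0$; this $h$ is consistent. Then $c^*=2$ and every vertex has $f=2$. An A* that pops $a$ before $x$ and then favors the goal never expands $x$. LEA*, however, may pop $(v_s,x)$ and then $(x,y)$, whose key is $1+1+0=2<g(v_g)=\infty$. It thereby evaluates an edge that A* never evaluates.

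The repair is to break A*'s ties by smaller $g$. The paper's proof uses this convention implicitly when it argues that $g(v)<g(v_g)$ gives $v$ priority over $v_g$. Under this rule, A* expands every vertex $u$ with $f^*(u)\le c^*$ and $g^*(u)<c^*$ before popping $v_g$. Otherwise the first unexpanded vertex on an optimal path to $u$ would sit in OPEN with key $(f,g)$ lexicographically below $(c^*,c^*)$.

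On the LEA* side, your Step 1 already supplies what is needed. When an edge with key $k>c^*$ is popped, your frontier argument shows $g(v_g)=c^*$ already, so the test in Line 4 fires. Hence every evaluated edge $(v_0,v_1)$ has $k\le c^*$ and $g(v_0)=g^*(v_0)$. It follows that $g^*(v_0)=k-\hat w(v_0,v_1)-h(v_1)<c^*$, and by consistency $f^*(v_0)\le k\le c^*$. So $v_0$ is expanded by A*, which evaluates all its outgoing edges, and this holds however LEA* breaks its own ties.

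For the vertex claim, count $v$ as expanded by LEA* only when one of its outgoing edges is evaluated, as the paper does. Under your push-based definition, a non-goal vertex with $h(v)=0$ and $g^*(v)=c^*$ can have its successors pushed by LEA* yet legitimately be skipped by A*.

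Completed this way, your route is more systematic than the paper's contradiction argument. You use monotone edge pops, optimal $g$ at first push, and a comparison of $f^*$ with $c^*$. The paper instead walks up parents to a common vertex and squeezes $c^*\le f(v)\le f((v,v_i))\le c^*$. Your route also shows that LEA* never re-expands a vertex.
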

\begin{proof}
For LEA*, we call evaluating any outgoing edge of vertex $v$ as ``expanding $v$."
A* expands $v$ by evaluating all outgoing edges of $v$. Vertex $v$ is expanded in LEA* if a subset of its outgoing edges is evaluated.

From the previous section, we know that LEA* finds an optimal path. 
Let $\tau^* = (e_0^*, e_1^*, \dots, e_{n-2}^*, e_{n-1}^*)$ be the path found by LEA*, and $c^*$ be the cost of $\tau^*$.
Let $T=(V,E)$ and $T^e = (V^e, E^e)$ be the expansion tree grown by A* and LEA*, respectively.
Let $W^v$ and $W^e$ be the set of vertices expanded by A* and LEA*, respectively.
By showing $W^e \subseteq W^v$, we prove the optimally efficient search of 
LEA*.
Note that if $W^e \subseteq W^v$, we have $V^e \subseteq V$. 

Assume $W^e \not\subseteq W^v$. Then, there exists $v$, such that $v \notin W^v$, $v \in W^e$, $v \in V^e$ and $v \in V$. 
To show this, we start with $v_0 \in W^e$, and $v_0 \notin W^v$. Then, $v_0 \in V^e$ (vertex must be in the tree for it to be expanded). If $v_0 \notin V$, we find $v_1$, which is the parent of $v$ in $T^e$. Then, we have $v_1 \in V^e$, $v_1 \in W^e$, and $v_1 \notin W^v$. If $v_1 \notin V$, we repeat this process by finding its parent vertex in $V^e$, and one of these parents $v_i$ must satisfy $v_i \in V$ since $V$ and $V^e$ share the same root vertex.  

From $v \notin W^v$, we have $c^* \leq f(v) = g(v) + h(v)$.
From $v \in W^e$, we have that there exists $v_i \in \mathsf{Succ}(v)$, such that $f((v,v_i)) = g(v)+\hat{w}((v,v_i)) +h(v_i) \leq c^*$. Note that $f(v) \leq f((v,v_i))$. Then, we have
\begin{equation}
    c^* \leq f(v) \leq f((v,v_i)) \leq c^*,
\end{equation}
which only holds when $f(v) = f((v,v_i)) = c^*$.
Note that
\begin{equation}
    \max\{f(e_0^*), f(e_1^*), \dots, f(e_{n-2}^*), f(e_{n-1}^*)\} \leq c^*. \label{eq1}
\end{equation}
Using (\ref{eq1}), in order for $(v, v_i)$ to be evaluated by LEA*, there exist $e_i^* = (v_0, v_1) \in \tau^*$, such that $f(e_i^*) = c^*$ and $g(v) < g(v_0)$. Otherwise, all $e_i^* \in \tau^*$ have a higher priority than $(v, v_i)$ and $(v, v_i)$ will not be evaluated. 
Using $f(v) = c^* $ and $g(v) < g(v_0) \leq g(v_g)$, $v$ has a higher priority than $v_g$. Therefore, $v$ must be expanded by A* before $v_g$ is selected from the vertex queue. Therefore, $v \in W^v$, contradicting $v \notin W^v$. Thus, $W^e \subseteq W^v$ and LEA* is optimally vertex efficient. 

Since A* evaluates all outgoing edges of $v \in W^v$ and LEA* evaluates a subset of outgoing edges of $v \in W^e$, LEA* evaluates fewer edges than A*.
\end{proof}

LEA* is an efficient SAPF algorithm. To integrate it into CBLS, a time dimension is added, similar to time-expanded A* \cite{Silver2005Cooperative}, and a constraint set is added.
Denote the constraint set as $\mathcal{C}$, which will be clarified in Section \ref{Sec:CBLS}. We add two lines after Line 5 to check if $(v_0, v_1) \in \mathcal{C}$ or not. If $(v_0, v_1) \in \mathcal{C}$, this edge has a conflict with other manipulators and is skipped. The algorithm moves to its next iteration.

\section{The CBLS Algorithm for MAFP}  \label{Sec:CBLS}
%%%%%%%%%%%%%%%%%%%%%%%%%%%%%%%%%%%%%%%%%%%%%%%%%%%%%%%%%%%%%%%%%%%%%%%%%%%
\IncMargin{.5em}
\begin{algorithm}
\caption{CBLS}
\label{alg:CBLS}
\KwIn{start and goal queries for $n$ arms}
\KwOut{planned paths for $n$ arms}

\For{$i = 1, \ldots ,n$}
{   
    $\mathcal{C}_i \leftarrow \emptyset$\;
    $(s_i, \mathrm{cost}_i) = \mathsf{LEA^*}(\mathrm{arm}_i, \mathcal{C}_i)$\;
    $Root.\mathcal{C}_i \leftarrow \mathcal{C}_i; \ Root.s_i \leftarrow s_i$\;
    $Root.\mathrm{cost}_i = \mathrm{cost}_i$\;
}
$Q \leftarrow \{Root\}$ \;
\While{$Q \neq \emptyset$}
{   
    $P \leftarrow Q.\mathsf{pop}$\;
    $\mathrm{Conflict} \leftarrow \mathsf{CheckConflict}(P)$\;
    \If{$\mathrm{Conflict} = \emptyset$}
    {
        $\mathsf{return} \ P$
    }
    \For{$\mathrm{arm}_i \in \mathrm{Conflict}$}
    {
        $\mathcal{C}_i^{'} \leftarrow P.\mathcal{C}_i \cup \mathrm{Conflict}.\mathrm{arm}_i$\;
        $(s_i^{'}, \mathrm{cost}_i^{'}) = \mathsf{LEA^*}(\mathrm{arm}_i, \mathcal{C}_i^{'})$\;
        $P_\mathrm{new} \leftarrow P$\;
        $P_\mathrm{new}.\mathcal{C}_i \leftarrow \mathcal{C}_i^{'}; \ P_\mathrm{new}.s_i \leftarrow s_i^{'}$\;
        $P_\mathrm{new}.\mathrm{cost}_i \leftarrow \mathrm{cost}_i^{'}$\;
        $Q.\mathsf{push}(P_\mathrm{new})$\;
    }
}
\end{algorithm}
\DecMargin{.5em}
%%%%%%%%%%%%%%%%%%%%%%%%%%%%%%%%%%%%%%%%%%%%%%%%%%%%%%%%%%%%%%%%%%%%%%%%%%%
CBLS uses a precomputed graph and makes use of LEA* for MAPF.
It solves constrained SAPF problems for each manipulator, checks for any conflict between planned paths, and resolves the conflict by adding new constraints for SAPF problems. It repeats this process until a solution is found or no solution can be found. 

In the original CBS algorithm, vertex conflicts and edge conflicts are defined for grid world problems.
For manipulator planning, we treat a vertex conflict as a special case of an edge conflict.
Therefore, we only consider edge conflicts.
After solving the SAPF problems for each manipulator, we obtain the paths of agent $i$ and $k$ which are given by $p_i = \{j_{i,1}, j_{i,2}, \ldots, j_{i,L}\}$ and $p_k = \{j_{k,1}, j_{k,2}, \ldots, j_{k,M}\}$, respectively.
Note that joint angle $j$ is a vertex $v$ of the graph since we sample vertices in the joint angle space.
We assume that each segment of the path (edge) takes one unit of time to travel.
This assumption is reasonable as graph vertices are approximately equally spaced.
Then $p_i$ takes $L$ units of time, and $p_k$ takes $M$ units of time.
If $L>M$, we can always append an appropriate number of $j_{k,M}$ to $p_k$ such that $p_i$ and $p_k$ have the same length.
This is also true for the case $L<M$.

When checking for conflicts between $p_i$ and $p_k$, we sequentially check for any collisions between edges in $p_i$ and $p_k$. 
If a collision happens between $(j_{i,\ell}, j_{i,\ell+1})$ and $(j_{k,\ell}, j_{k,\ell+1})$, a conflict is formed given by $((j_{i,\ell}, j_{i,\ell+1}), (j_{k,\ell}, j_{k,\ell+1}), \ell)$.
This conflict states that agent $i$ and $k$ cannot move along the edge $(j_{i,\ell}, j_{i,\ell+1})$ and $(j_{k,\ell}, j_{k,\ell+1})$ simultaneously at time $\ell$.
To resolve this conflict, a new constraint $((j_{i,\ell}, j_{i,\ell+1}), \ell)$ is added to the constraint set $\mathcal{C}_i$ and a new constraint $((j_{k,\ell}, j_{k,\ell+1}), \ell)$  is added to $\mathcal{C}_k$.
In the next iteration of CBLS, SAPF problems with the updated constraint set are solved for agents $i$ and $k$.

The CBLS algorithm is given by Algorithm \ref{alg:CBLS}.
In Lines 1-5, individual SAPF problems for each agent are solved using LEA*.
Initially, the constraint set $c_i$ for each agent $i$ is empty. 
$s_i$ and $\mathrm{cost}_i$ are the planned path and path cost for agent $i$, respectively.
The nodes in the conflict tree save the planning solutions for all agents.
The initial solution is saved in the $Root$ node.
The queue $Q$ is initialized with the $Root$ node (Line 6).
The nodes in $Q$ are prioritized by the total path cost of all agents.
In Line 8, the node with the lowest cost is popped from $Q$.
Manipulator-manipulator collision checking for paths in $P$ is performed in Line 9.
If no conflict is found among the paths, a solution is returned (Lines 10-11).
Otherwise, the conflict is resolved in Lines 12-18.
For each $arm_i$ affected by the conflict, the constraint set is updated (Line 13), and a SAPF problem is solved for $arm_i$ using LEA* (Line 14). 
We update $P$ with the new solution of $arm_i$ (Lines 16-17), and a new node $P_\mathrm{new}$ is added to the queue $Q$ (Line 18).

\begin{table*}
\caption{Planning results for a 7DOF manipulator}\label{tab:7Dtime}
\centering
\begin{tabular}{L{1.6cm}|L{1.3cm}L{1.3cm}L{1.3cm}|L{1.3cm}L{1.3cm}L{1.3cm}|L{1.3cm}L{1.3cm}L{1.3cm}}
\toprule
\hline
\multirow{4}{1.6cm} { } & \multicolumn{3}{c|}{Planning time (s)} & \multicolumn{3}{c|}{Edge evaluations} & \multicolumn{3}{c}{Path length}  \\
\cline{2-10}  &  N=${1,000}$  & N=${5,000}$   &  N=${10,000}$   &  N=${1,000}$   & N=${5,000}$   &  N=${10,000}$ &  N=${1,000}$   & N=${5,000}$   &  N=${10,000}$     \\
\hline     
A* ($\varepsilon$=1)      & 0.6773 & 1.7215   & 3.5747  & 331.29  & 917.36 & 1929.2 & 23.9935 & 22.1431 & \textbf{21.5779} \\
LazySP ($\varepsilon$=1)  & 0.1060 & 0.3527   & 0.6567  & \textbf{15.77}   & \textbf{15.11}  & \textbf{17.69}  & 23.9935 & 22.1431 & \textbf{21.5779} \\
LEA* ($\varepsilon$=1)    & \textbf{0.0763}   & \textbf{0.1123}   & \textbf{0.1755}  & 51.45   & 82.55  & 129.48  & 23.9935 & 22.1431 & \textbf{21.5779} \\
\hline
A* ($\varepsilon$=2)      & 0.2633 & 0.3800   & 0.4879  & 147.34  & 235.69 & 309.07   & 24.4210 & 22.4675 & 21.7728 \\
LazySP ($\varepsilon$=2)  & 0.0431 & 0.0433   & 0.0472  & \textbf{15.12}   & \textbf{13.20}  & \textbf{14.61}  & 24.4210 & 22.4675 & 21.7728  \\
LEA* ($\varepsilon$=2)    & \textbf{0.0296} & \textbf{0.0234}   & \textbf{0.0238}  & 19.79   & 16.40  & 17.41   & 24.4210 & 22.4675 & 21.7728 \\
\bottomrule
\end{tabular}
% \vspace{-0.45cm}
\end{table*}

\begin{figure*}[htb]
 \centering
   \begin{tabular}{@{}ccc@{}}
   \begin{minipage}{.3\textwidth}
    \includegraphics[width=\textwidth]{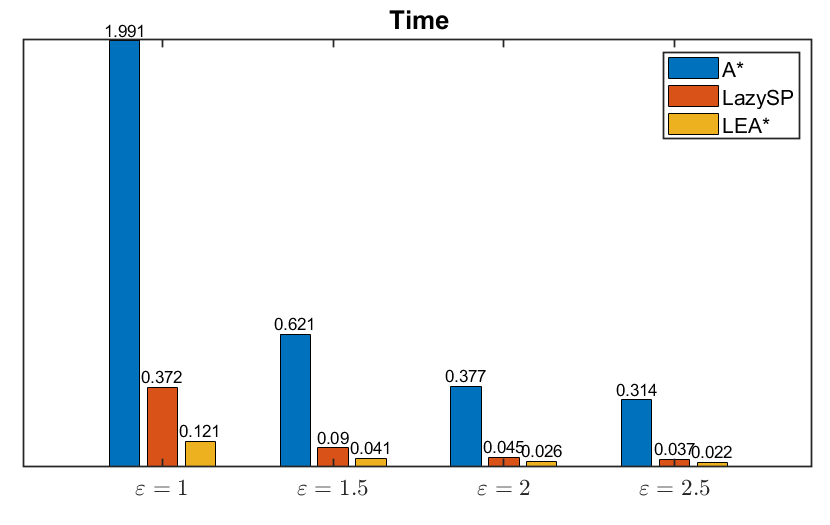}
    % \captionof*{figure}{(a)}
   \end{minipage} &
    \begin{minipage}{.3\textwidth}
    \includegraphics[width=\textwidth]{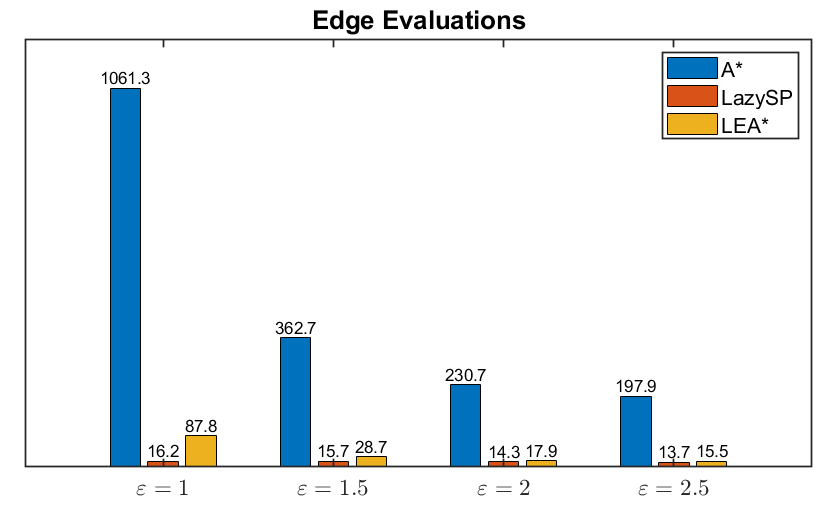}
    % \captionof*{figure}{(b)}
   \end{minipage} &
   \begin{minipage}{.3\textwidth}
    \includegraphics[width=\textwidth]{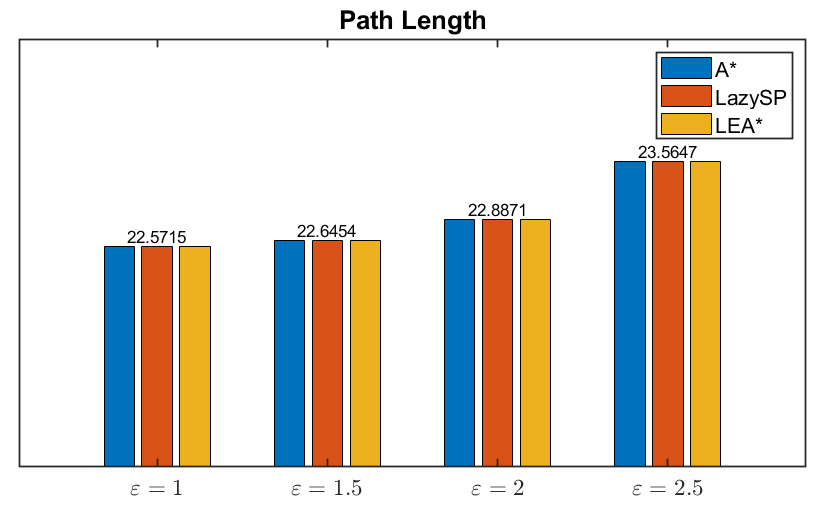}
    % \captionof*{figure}{(c)}
   \end{minipage} 
  \end{tabular}
  \caption{Planning results for a 7DOF manipulator.}
  \label{singleArm_compare}
\end{figure*}

\section{Simulation and Experiments}

In this section, we first provide simulation results for LEA* and CBLS. For single-manipulator planning, we compare three SAPF algorithms: LEA, A*, and LazySP.
For multi-manipulator planning, we consider scenarios with up to five manipulators operating in close proximity and compare the proposed CBLS algorithm with CBS and RRT-connect.
CBS uses A* for low-level SAPF planning and RRT-connect plans in the composite state space.
Then, real-world experiments based on the CBLS algorithm are conducted with two UR5 manipulators.

\begin{figure}[htb]
    \centering
    \begin{subfigure}[b]{0.45\columnwidth}
        \centering
        \includegraphics[width=1\columnwidth]{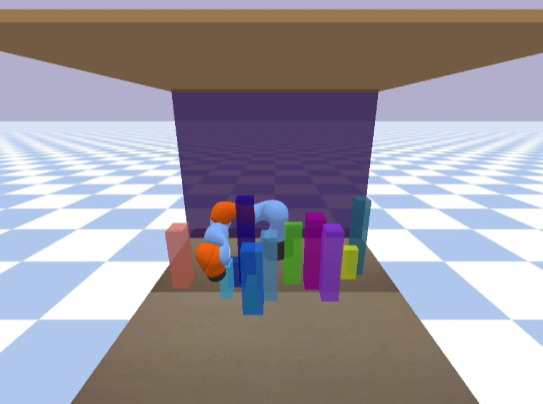}
        %\caption{}
    \end{subfigure}
    \begin{subfigure}[b]{0.45\columnwidth}
        \centering
        \includegraphics[width=1\columnwidth]{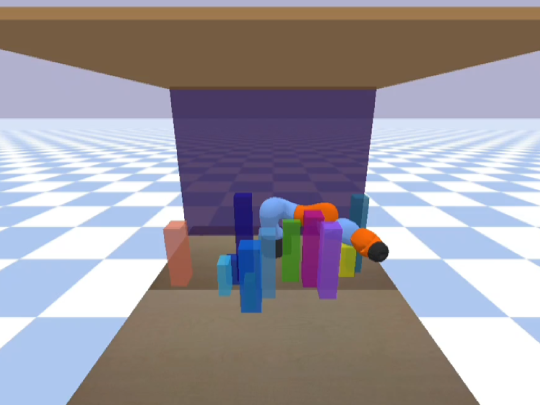}
        %\caption{}
    \end{subfigure}
    \caption{Single manipulator tabletop example.}
    \label{7Dexample}
\end{figure}

\subsection{LEA* Simulation Results}

We consider tabletop planning problems for a 7-DOF KUKA manipulator. 
The PyBullet simulator \cite{Coumans2021Pybullet} is adopted for the simulations.
The planning environment is shown in Figure~\ref{7Dexample}.
Cubic obstacles are randomly placed on the tabletop.
We generate different environments by varying the number, the location, and the size of obstacles in each environment.
The environment is divided into sparse, medium, and cluttered environments where the number of obstacles are $4$, $8$, and $12$, respectively.
A back wall and a ceiling are used to limit the operational space of the manipulator. 
The width, depth, and height of each obstacle are sampled uniformly and randomly from their respective intervals.
The locations of the obstacles on the table are also sampled.
We sampled 30 environments in total, and graphs with different sizes were precomputed.  
We consider small graphs, medium graphs, and large graphs with $N = 1{,}000$, $N = 5{,}000$, and $N = 10{,}000$, respectively. 
We use the Manhattan distance to sum the path lengths across individual degrees of freedom and across different manipulators. We set $\text{max\_count} = 2000$, $\text{m\_dist} = 3, 2.7, 2.5$, respectively, and the connecting radius $r = 5$, $4.5$, $4.2$, respectively.
By randomly sampling 50 start-goal queries for each environment and graph combination, the total number of planning problems is $4{,}500$.

The planning results for A*, LazySP, and LEA* are given in Table~\ref{tab:7Dtime}. The three algorithms were tested using the same $4{,}500$ problems. The statistics vary considerably across different planning problems, and Table~\ref{tab:7Dtime} reports the average results.
A*, LazySP, and LEA* use a cost-to-go heuristic $h(\cdot)$ to guide the graph search. 
While searching with an admissible heuristic guarantees path optimality, using an inflated heuristic has been shown to reduce planning time at the cost of optimality \cite{gammell2020batch, Cohen2014Single}.  
Bounded suboptimal solutions can be obtained by inflating $h(\cdot)$ with a factor $\varepsilon$ \cite{Pearl1984}. 
Results with inflation factor $\varepsilon = 2$ are also reported in Table~\ref{tab:7Dtime}.

For A* and LEA*, the planning time is approximately proportional to the number of evaluated edges during the search.
Using lazy edge evaluation, LEA* reduces edge evaluation significantly compared to A*.  
LazySP assumes all edges are collision-free and uses A* to find the best path.
Then, the best path is checked for collision. 
If a collision happens, the graph is updated, and a new planning problem is solved using A*.
LazySP is shown to evaluate the minimum number of edges (edge optimal) at the cost of repeatedly calling the A* algorithm in the inner loop.
As shown in Table~\ref{tab:7Dtime}, even if LazySP evaluates the minimum number of edges, LEA* is up to three times faster than LazySP for $N = 5,000$ and $N = 10,000$, as LazySP requires more graph operations such as priority queue updating and vertex expansions.
LEA* uses the least amount of time to find the same solution compared to the other two algorithms.
It performs 7DOF manipulator planning at 30-50 Hz.

In Table~\ref{tab:7Dtime}, the planning time and edge evaluations are further reduced with an inflation factor $\varepsilon = 2$. 
More results for different inflation factors are shown in Figure~\ref{singleArm_compare}.
With only a slight increase in path length, the planning time decreases considerably, and the edge evaluations of LEA* approach those of LazySP. 
A study regarding different graph sizes is provided in Figure~\ref{LEAstar_inflation2}.
LEA* with $\epsilon = 2$ is applied to the same set of planning problems with varying graph sizes $N$.
As the graph size increases, the success rate improves and the path length decreases.
Note that the planning time also decreases on average. 
This improvement is due to the inflation factor, which helps focus the search.
Also, a denser graph reduces detours between the start and goal. 
As shown in Table~\ref{tab:7Dtime}, the planning time may increase with larger $N$ when $\epsilon = 1$.

\begin{figure}[htb]
    \centering
    \begin{subfigure}[b]{0.46\columnwidth}
        \centering
        \includegraphics[width=1\columnwidth]{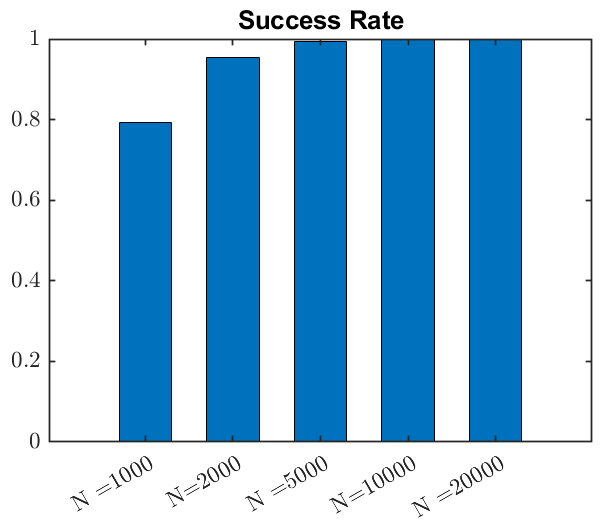}
        %\caption{}
    \end{subfigure}
    \begin{subfigure}[b]{0.46\columnwidth}
        \centering
        \includegraphics[width=1\columnwidth]{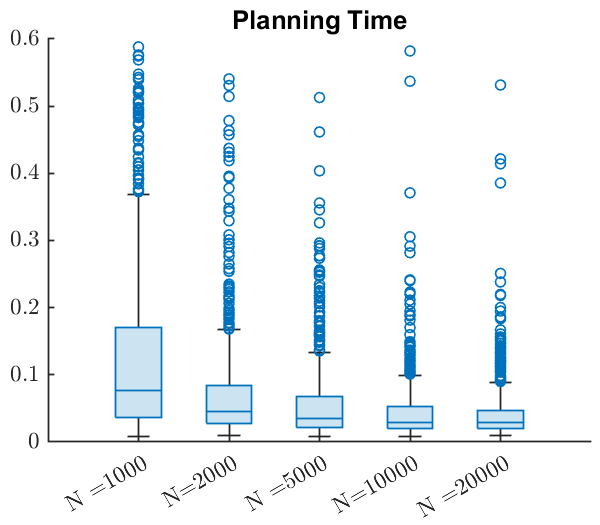}
        %\caption{}
    \end{subfigure}
    \begin{subfigure}[b]{0.45\columnwidth}
        \centering
        \includegraphics[width=1\columnwidth]{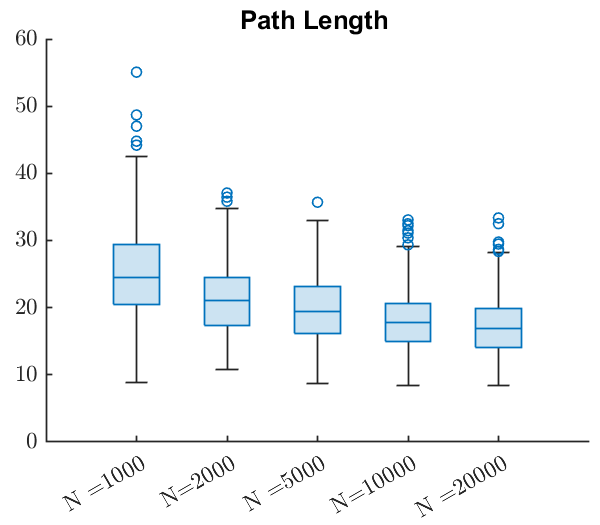}
        %\caption{}
    \end{subfigure}
    \begin{subfigure}[b]{0.45\columnwidth}
        \centering
        \includegraphics[width=1\columnwidth]{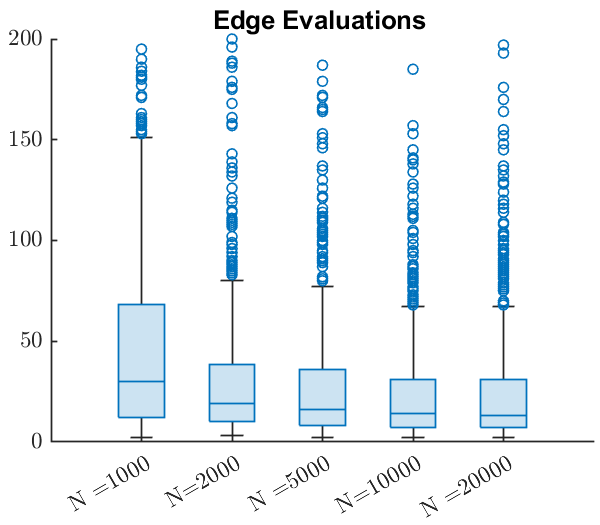}
        %\caption{}
    \end{subfigure}
    \caption{Results for different parameter $N$. The results are obtained by testing LEA* ($\epsilon$ = 2) on the same set of problems by varying $N$.}
    \label{LEAstar_inflation2}
\end{figure}

\begin{figure}[t]
    \centering
    \begin{subfigure}[b]{0.45\columnwidth}
        \centering
        \includegraphics[width=1\columnwidth]{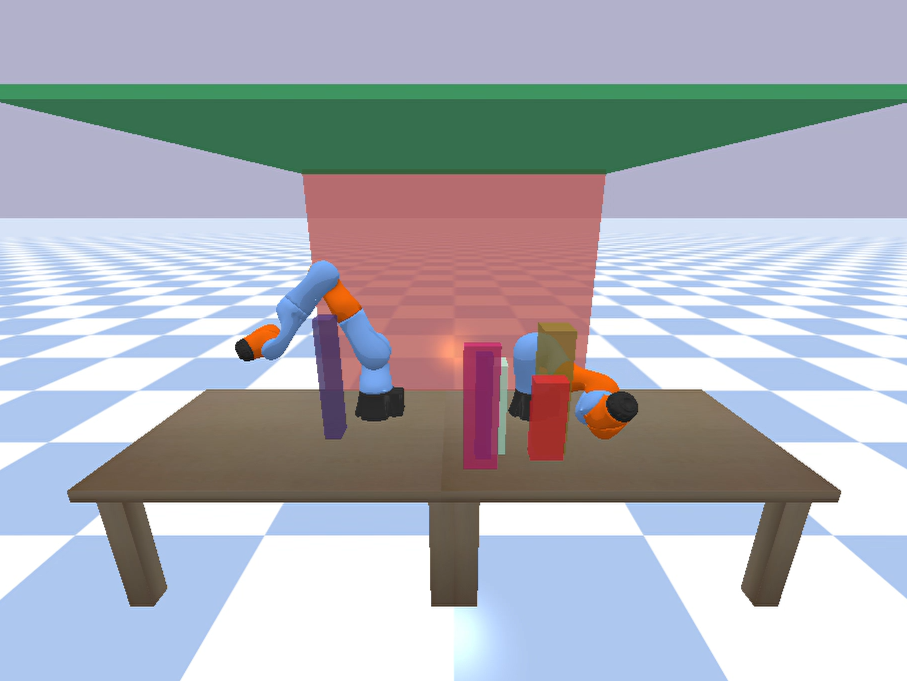}
        %\caption{Two manipulators}
    \end{subfigure}
    \begin{subfigure}[b]{0.45\columnwidth}
        \centering
        \includegraphics[width=1\columnwidth]{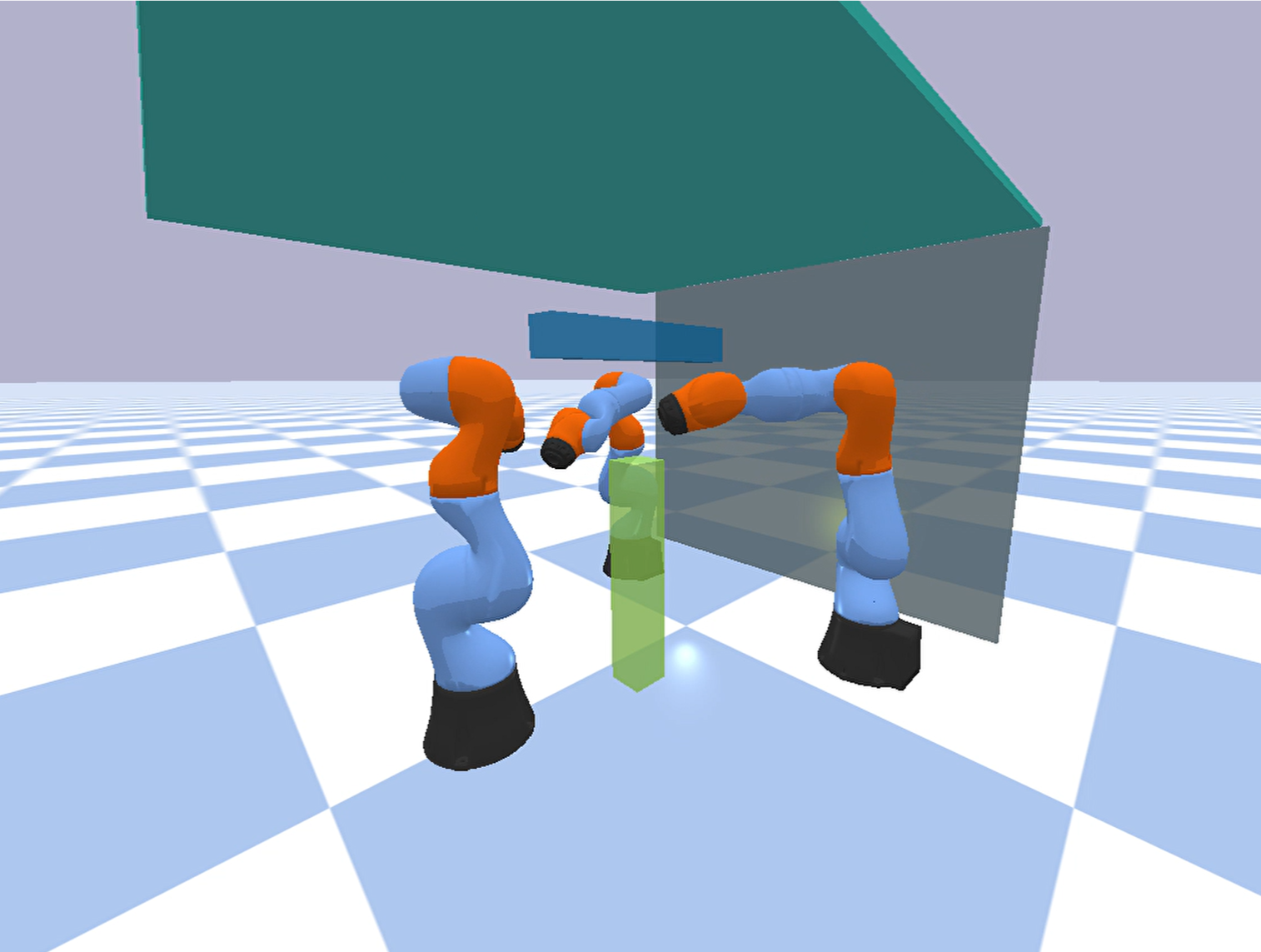}
        %\caption{Three manipulators}
    \end{subfigure}
    \begin{subfigure}[b]{0.45\columnwidth}
        \centering
        \includegraphics[width=1\columnwidth]{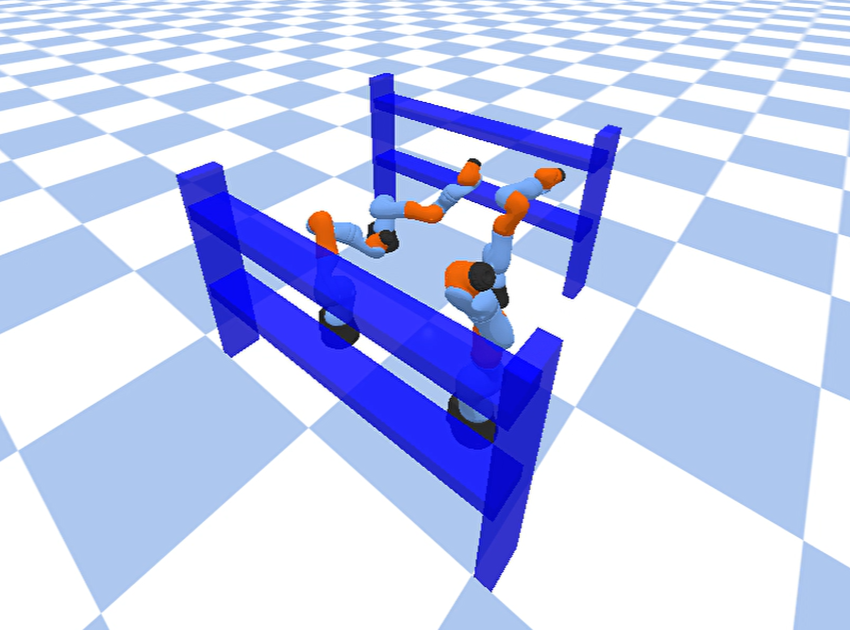}
        %\caption{Four manipulators}
    \end{subfigure}
    \begin{subfigure}[b]{0.45\columnwidth}
        \centering
        \includegraphics[width=1\columnwidth]{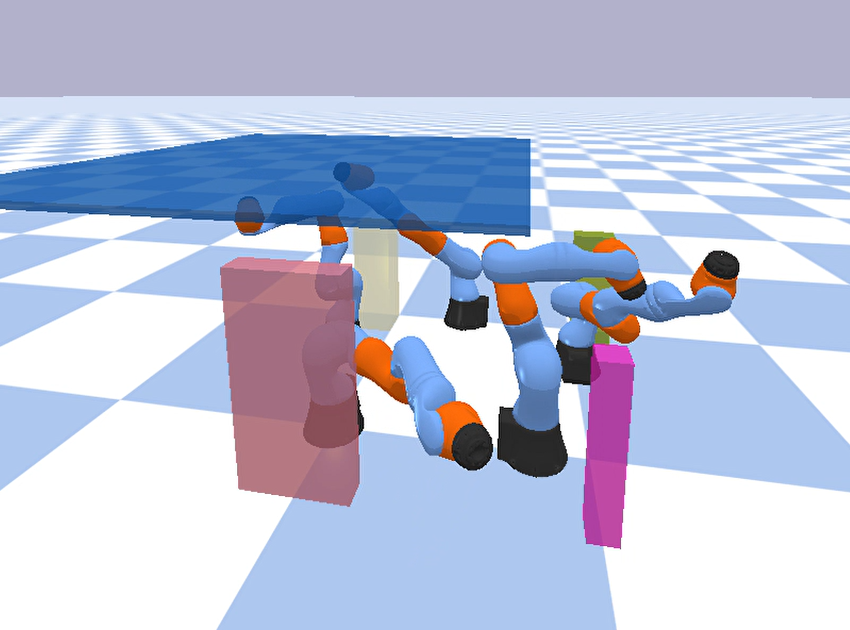}
        %\caption{Five manipulators}
    \end{subfigure}
    \caption{Multi-manipulator planning.}
    \label{multi-arm}
\end{figure}

\begin{figure}[htb]
    \centering
    \includegraphics[width=0.8\columnwidth]{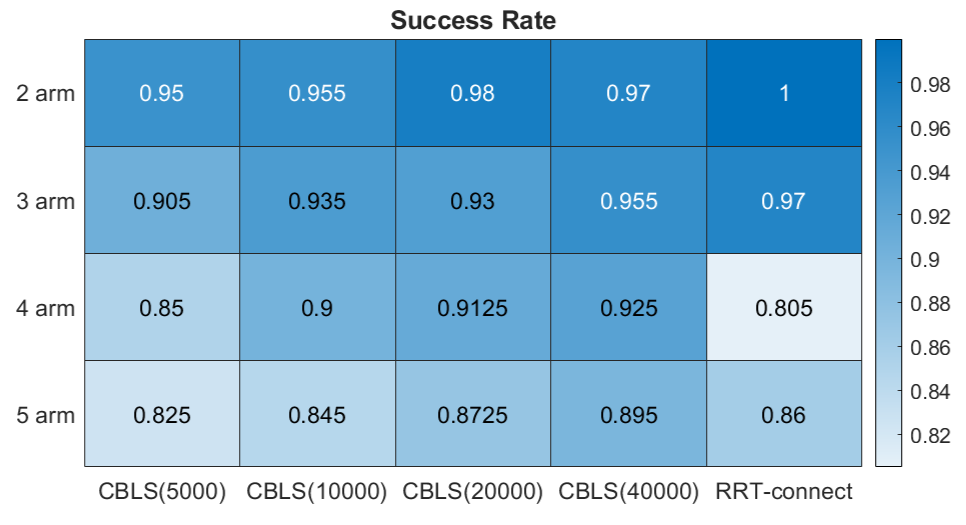}
    \caption{Multi-manipulator planning Success Rate (SR). SR decreases as the number of manipulators increases. SR increases as the graph size parameter $N$ increases.}
    \label{SR}
\end{figure}

\subsection{CBLS Simulation Results}

Planning problems for two-manipulator, three-manipulator, four-manipulator, and five-manipulator systems were studied. The planning environments are shown in Figure~\ref{multi-arm}.
Figure~\ref{multi-arm}(c) represents a shelf environment. In the other three environments, the obstacles are randomly sampled following the single manipulator case.
We construct  a graph with $N = 5{,}000$, $10{,}000$, $20{,}000$, and $40{,}000$ vertices, respectively. 
We set $\text{max\_count} = 2000$, $\text{m\_dist} = 2.7$, $2.5$, $2.2$, $2$, respectively,
and the connecting radius $r = 4.5$, $4.2$, $4.0$, $3.5$, respectively.
For each environment, we sampled 20 start-goal queries. 

CBLS, CBS, and RRT-connect were used to solve the same set of planning problems. 
Both CBLS and CBS used the same precomputed graphs (Section \ref{Sec:Graph}).
The only difference is that CBLS uses LEA* for SAPF while CBS uses A* for SAPF.
An additional time dimension is added to A* and LEA*.
Each vertex can move to its neighbors or stay at its current position at the next time step.

The RRT-connect algorithm is a representative algorithm for solving high-dimensional planning problems. It is probabilistically complete but does not guarantee properties such as asymptotic optimality. 
For many planning problems, it is one of the fastest algorithms for finding a feasible solution; however, the resulting solutions are typically of low quality \cite{Orthey2023Sampling}.
By combining the joint angle spaces of all manipulators, we use RRT-connect to solve planning problems with state spaces of up to 35 dimensions.
Since RRT-connect is a randomized algorithm, it produces different solutions for the same problem across runs.
We solve each problem ten times using RRT-connect and use the average results for comparison.
CBLS and CBS are deterministic algorithms.
For each planning problem, CBLS/CBS returns the same solution, with nearly identical planning times across runs. 

We set the maximum planning time to 15 seconds.
The planning query fails if the algorithm returns no solution or if the planning time exceeds 15 seconds. 
The success rate of multi-manipulator planning is given in Figure~\ref{SR}.
For CBLS, the planning success rate increases as the graph size $N$ increases.
The success rate decreases with the number of manipulators.
RRT-Connect achieves high success rates for two- and three-manipulator planning; however, its performance drops rapidly for four and five manipulators, as the planning time often exceeds the 15-second limit.

\begin{comment}
\begin{table*}
\caption{Planning results for two manipulators}\label{tab:two-arm}
\centering
\begin{tabular}{L{1.5cm}|L{0.89cm}L{0.93cm}L{0.93cm}L{0.93cm}|L{0.9cm}L{0.93cm}L{0.93cm}L{0.93cm}|L{0.9cm}L{0.93cm}L{0.93cm}L{0.93cm}}
\toprule
\hline
\multirow{4}{1.5cm} { } & \multicolumn{4}{c|}{Planning time (s)} & \multicolumn{4}{c|}{Path length} & \multicolumn{4}{c}{Edge evaluations} \\
\cline{2-13}  &  N=${5,000}$   & ${10,000}$   &  ${20,000}$   &  ${40,000}$   &  N=${5,000}$   & ${10,000}$   &  ${20,000}$   &  ${40,000}$  &  N=${5,000}$   & ${10,000}$   &  ${20,000}$   &  ${40,000}$     \\
\hline     
CBS ($\varepsilon$=1)   & 0.9719 & 1.2903   & 1.5009  & 1.6520  & 38.1141 & 33.5926 & 30.8849 & \textbf{30.2553}   & 4460.8  & 6057 & 7889 & 9722.8 \\
CBLS ($\varepsilon$=1)  & 0.2779 & 0.2654   & 0.2803  & 0.3426  & 38.1141 & 33.5926 & 30.8849 & \textbf{30.2553}   & 1281.3   & 1249  & 1414.3 & 1883  \\
CBS ($\varepsilon$=2)   & 0.2910 & 0.3595   & 0.3176  & 0.4289  & 38.8133  & 35.0762  & 32.4085  & 31.6018   & 1349.8   & 1604.7  & 1568.5 & 2281.8 \\
CBLS ($\varepsilon$=2)  & \textbf{0.0708} & \textbf{0.0716}  & \textbf{0.0507}  & \textbf{0.0757} & 38.8133 & 35.0762  & 32.4085 & 31.6018 & \textbf{316.1}  & \textbf{298.1}  & \textbf{195.7} & \textbf{346.4} \\
\cline{2-13}
RRT-connect & \multicolumn{4}{c|}{0.3579} & \multicolumn{4}{c|}{57.5607} & \multicolumn{4}{c}{75.5642} \\
\bottomrule
\end{tabular}
% \vspace{-0.45cm}
\end{table*}
\end{comment}

\begin{table*}
\caption{Planning results for three manipulators}\label{tab:three-arm}
\centering
\begin{tabular}{L{1.5cm}|L{0.89cm}L{0.93cm}L{0.93cm}L{0.93cm}|L{0.9cm}L{0.93cm}L{0.93cm}L{0.93cm}|L{0.9cm}L{0.93cm}L{0.93cm}L{0.93cm}}
\toprule
\hline
\multirow{4}{1.5cm} { } & \multicolumn{4}{c|}{Planning time (s)} & \multicolumn{4}{c|}{Path length} & \multicolumn{4}{c}{Edge evaluations} \\
\cline{2-13}  &  N=${5,000}$   & ${10,000}$   &  ${20,000}$   &  ${40,000}$   &  N=${5,000}$   & ${10,000}$   &  ${20,000}$   &  ${40,000}$  &  N=${5,000}$   & ${10,000}$   &  ${20,000}$   &  ${40,000}$     \\
\hline     
CBS ($\varepsilon$=1)   & 0.8770 & 1.3198  & 1.9067  & 2.1713  & 60.9560 & 51.6858 & 48.4792 & \textbf{46.1714}   & 3817.1  & 7063.5 & 10786 & 13214 \\
CBLS ($\varepsilon$=1)  & 0.2586 & 0.3313  & 0.4719  & 0.6278  & 60.9560 & 51.6858 & 48.4792 & \textbf{46.1714}   & 1266.2   & 1647.5  & 2162.3 & 2888.5  \\
CBS ($\varepsilon$=2)   & 0.2692 & 0.3967  & 0.4201  & 0.4863  & 62.2471 & 53.8270 & 50.9448 & 48.5605   & 1292.3   & 2071.1  & 2228.8 & 2660.8 \\
CBLS ($\varepsilon$=2)  & \textbf{0.0985} & \textbf{0.1107}  & \textbf{0.1321}  & \textbf{0.1982} & 62.2471 & 53.8270  & 50.9448 & 46.5605 & \textbf{383.65}  & \textbf{395.97}  & \textbf{342.09} & \textbf{432.5} \\
\cline{2-13}
RRT-connect & \multicolumn{4}{c|}{0.7631} & \multicolumn{4}{c|}{83.0737 } & \multicolumn{4}{c}{109.4276  } \\
\bottomrule
\end{tabular}
% \vspace{-0.45cm}
\end{table*}

\begin{comment}
\begin{table*}
\caption{Planning results for four manipulators}\label{tab:four-arm}
\centering
\begin{tabular}{L{1.5cm}|L{0.89cm}L{0.93cm}L{0.93cm}L{0.93cm}|L{0.9cm}L{0.93cm}L{0.93cm}L{0.93cm}|L{0.9cm}L{0.93cm}L{0.93cm}L{0.93cm}}
\toprule
\hline
\multirow{4}{1.5cm} { } & \multicolumn{4}{c|}{Planning time (s)} & \multicolumn{4}{c|}{Path length} & \multicolumn{4}{c}{Edge evaluations} \\
\cline{2-13}  &  N=${5,000}$   & ${10,000}$   &  ${20,000}$   &  ${40,000}$   &  N=${5,000}$   & ${10,000}$   &  ${20,000}$   &  ${40,000}$  &  N=${5,000}$   & ${10,000}$   &  ${20,000}$   &  ${40,000}$     \\
\hline     
CBS ($\varepsilon$=1)   & 1.1568 & 1.4060  & 1.6869  & 2.2472  & 69.9143 & 61.5470 & 58.0421 & \textbf{56.7661}   & 5133.1  & 6620.8 & 9262 & 12919 \\
CBLS ($\varepsilon$=1)  & 0.2777 & 0.2290  & 0.2637  & 0.4198  & 69.9143 & 61.5470 & 58.0421 & \textbf{56.7661}   & 974.986   & 717.53  & 831.17 & 1389.4  \\
CBS ($\varepsilon$=2)   & 0.2970 & 0.3222  & 0.4308  & 0.5158  & 72.2172 & 64.8581 & 61.6884 & 60.2417   & 1216.5   & 1395.6   & 1944.9 & 2356 \\
CBLS ($\varepsilon$=2)  & \textbf{0.1050} & \textbf{0.1102}  & \textbf{0.1392}  & \textbf{0.1915} & 72.2172 & 64.8581  & 61.6884 & 60.2417 & \textbf{194.88}  & \textbf{155.07}  & \textbf{177.06} & \textbf{169.85} \\
\cline{2-13}
RRT-connect & \multicolumn{4}{c|}{1.5563} & \multicolumn{4}{c|}{120.7914} & \multicolumn{4}{c}{229.2273} \\
\bottomrule
\end{tabular}
% \vspace{-0.45cm}
\end{table*}
\end{comment}

\begin{table*}[ht]
\caption{Planning results for five manipulators}\label{tab:five-arm}
\centering
\begin{tabular}{L{1.5cm}|L{0.89cm}L{0.93cm}L{0.93cm}L{0.93cm}|L{0.9cm}L{0.93cm}L{0.93cm}L{0.93cm}|L{0.9cm}L{0.93cm}L{0.93cm}L{0.93cm}}
\toprule
\hline
\multirow{4}{1.5cm} { } & \multicolumn{4}{c|}{Planning time (s)} & \multicolumn{4}{c|}{Path length} & \multicolumn{4}{c}{Edge evaluations} \\
\cline{2-13}  &  N=${5,000}$   & ${10,000}$   &  ${20,000}$   &  ${40,000}$   &  N=${5,000}$   & ${10,000}$   &  ${20,000}$   &  ${40,000}$  &  N=${5,000}$   & ${10,000}$   &  ${20,000}$   &  ${40,000}$     \\
\hline     
CBS ($\varepsilon$=1)   & 0.3895 & 0.2646  & 0.4970  & 0.4327  & 85.7357 & 76.3592 & 71.5091 & \textbf{70.048}   & 2096.6  & 1411.9 & 2709.3 & 2778.6 \\
CBLS ($\varepsilon$=1)  & 0.2417 & 0.2113  & 0.2971  & 0.3918  & 85.7357 & 76.3592 & 71.5091 & \textbf{70.048}   & 1125.7   & 823.52  & 1103.5 & 1536.1  \\
CBS ($\varepsilon$=2)   & 0.1025 & 0.1206  & 0.1486  & 0.2647  & 88.9099 & 81.1278 & 76.0302 & 74.7359   & 334.35   & 441.09  & 601.39 & 1254.2 \\
CBLS ($\varepsilon$=2)  & \textbf{0.1231} & \textbf{0.1278}  & \textbf{0.1467}  & \textbf{0.2208} & 88.9099 & 81.1278  & 76.0302 & 74.7359 & \textbf{171.92}  & \textbf{143.99}  & \textbf{141.74} & \textbf{193.29} \\
\cline{2-13}
RRT-connect & \multicolumn{4}{c|}{2.5072} & \multicolumn{4}{c|}{174.0136} & \multicolumn{4}{c}{280.6029} \\
\bottomrule
\end{tabular}
% \vspace{-0.45cm}
\end{table*}

Multi-manipulator planning results are given in Tables~\ref{tab:three-arm}-\ref{tab:five-arm}.
For the planning results of two manipulators and four manipulators, please refer to the tables shown in the supplementary video.
The planning time of CBLS outperforms CBS and RRT-connect for all cases, on average. 
Compared to RRT-connect, which directly composes the DOF of all manipulators, our CBLS combines precomputation, fast SAPF, and conflict-based search.
CBLS and CBS also find paths with shorter lengths compared to RRT-connect.
In CBLS and CBS, edge evaluation requires checking the collision of an edge for a single manipulator,
while in RRT-connect, edge evaluation requires checking the collision of all manipulators.
From the results of Tables~\ref{tab:three-arm}-\ref{tab:five-arm}, it is evident that using an inflated heuristic is beneficial.
It greatly decreases the planning time, returning bounded suboptimal solutions that only increase the (resolution) optimal path length by a small factor.

Dynamic obstacles can be handled by replanning. 
At every time step, the predicted trajectories of the moving obstacles within a future time horizon are used for planning. Beyond this horizon, the dynamic obstacles are not considered. 
Thus, the planned multi-manipulator path is guaranteed to be collision-free within the upcoming horizon. 
As the manipulators move along the planned path, the obstacle trajectories for the next horizon become available. 
Replanning is triggered if any potential collision is detected. 
This procedure is repeated until the manipulators reach their goals. 
Path planning for three manipulators, while avoiding dynamic obstacles, is demonstrated in Figure~\ref{dynamicObstacle}.
The planning time is correlated with the duration of the predicted obstacle trajectories.
In this example, the planning frequency is above 5 Hz, and the time horizon can be set to 0.2 s.
With a faster planner, less information about the moving obstacles is required.

\begin{figure}[htb]
    \centering
    \begin{subfigure}[b]{0.45\columnwidth}
        \centering
        \includegraphics[width=1\columnwidth]{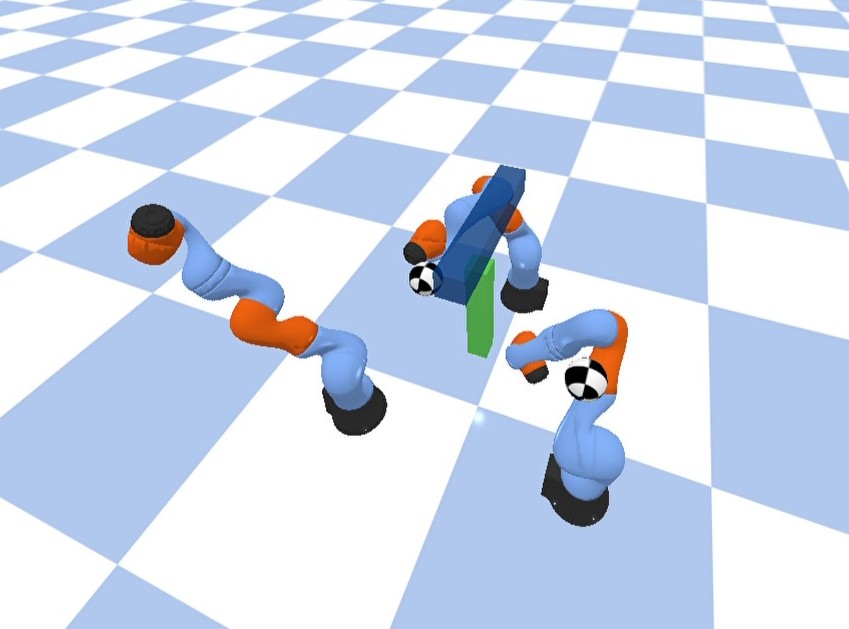}
    \end{subfigure}
    \begin{subfigure}[b]{0.45\columnwidth}
        \centering
        \includegraphics[width=1\columnwidth]{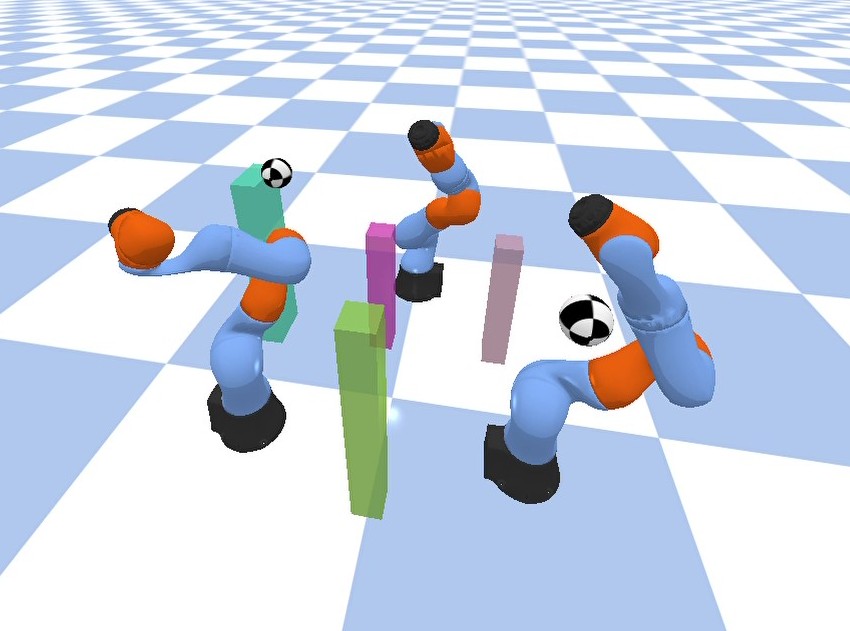}
    \end{subfigure}
    \caption{Planning with dynamic obstacles.}
    \label{dynamicObstacle}
\end{figure}

\begin{figure}[htb]
    \centering
    \begin{subfigure}[b]{0.48\columnwidth}
        \centering
        \includegraphics[width=1\columnwidth]{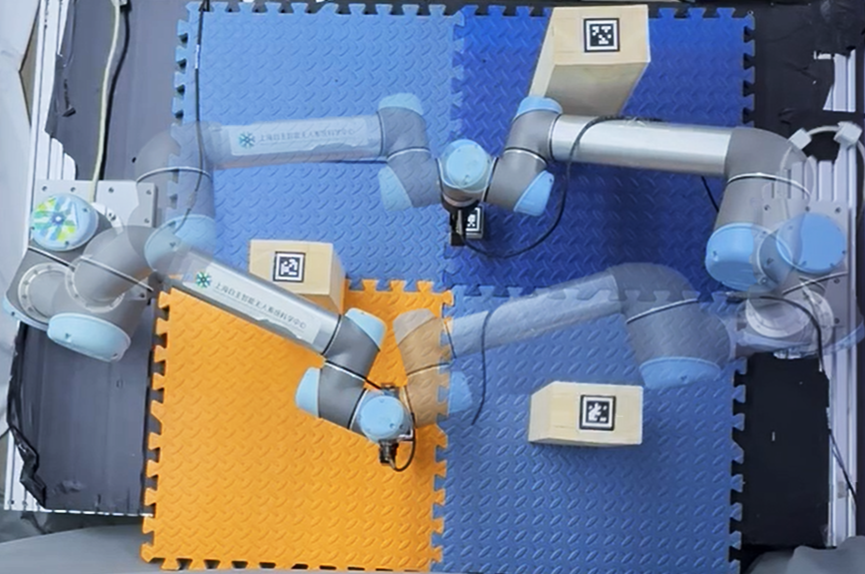}
    \end{subfigure}
    \begin{subfigure}[b]{0.48\columnwidth}
        \centering
        \includegraphics[width=1\columnwidth]{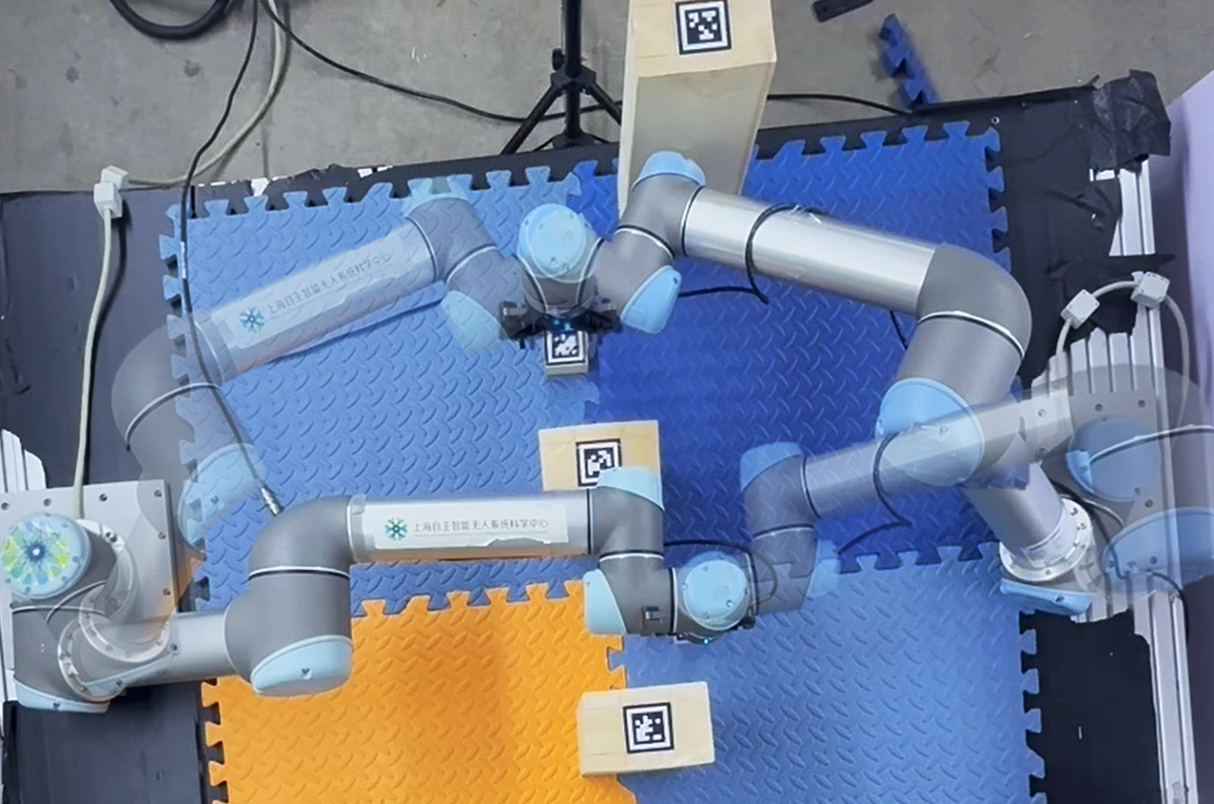}
    \end{subfigure}
    \caption{Two planning examples. CBLS plans collision-free paths for the UR5 manipulators to move from their starting configurations to the goals (shown in transparent mode).}
    \label{Experiment_Results}
\end{figure}

\subsection{CBLS Experimental Results}

We performed extensive real-world experiments using the CBLS algorithm for two-manipulator planning. 
The environment setup is shown in Figure \ref{Experiment_Setting} and \ref{Experiment_Results}.
Two UR5 manipulators were used, and obstacles (shown as wooden bricks) were randomly placed. 
Similar to the KUKA manipulator example, a simulation environment was built to replicate the experiment environment.
We use an external camera and QR codes to identify and localize obstacles.
After calibrating the camera and the manipulator system, we compute the obstacle locations in real time in the manipulator’s coordinate frame.
This approach provides a relatively accurate model of the environment.
Future work includes conducting more complex experiments, adopting more sophisticated perception methods, and planning under uncertainty.
Our experiments show that CBLS achieves real-time planning.
Manipulator self-collision, manipulator-obstacle collision, and manipulator-manipulator collision were avoided successfully during the experiments.
Snapshots of the two manipulators switching goals are shown in Figure~\ref{Experiment_Results}.
Animations of the experiments and simulations can be found at \texttt{\url{https://youtu.be/qC--BcjcdwA}}.
% \texttt{\url{https://cbls-multi-arm.netlify.app}}.

\section{Conclusion}

This paper introduces the CBLS algorithm for fast planning in high-dimensional spaces.
CBLS is based on CBS, a start-of-the-art backbone for many MAPF algorithms and is tailored for manipulator planning.
It introduces two enhancements to CBS for solving SAPF problems more efficiently.
First, a lazily evaluated graph is used with controlled sparsity, which is precomputed for online planning.
This graph is environment-independent and can be applied to any number of manipulators.
Second, Lazy Edged-based A* (LEA*) is utilized as the low-level planner 
of CBS for efficient SAPF.
It is shown that LEA* is complete, finds the optimal solution, is optimally vertex efficient, and reduces the number of collision checks using lazy search.
Various simulations are conducted to demonstrate the theory. 
The advantages of LEA* and CBLS are illustrated by comparing them with their respective competing algorithms. 
Practical applications of CBLS on a two-manipulator system are also conducted to validate the proposed methods.

% \balance

% \newpage

% \section{Biography Section}
% If you have an EPS/PDF photo (graphicx package needed), extra braces are
%  needed around the contents of the optional argument to biography to prevent
%  the LaTeX parser from getting confused when it sees the complicated
%  $\backslash${\tt{includegraphics}} command within an optional argument. (You can create
%  your own custom macro containing the $\backslash${\tt{includegraphics}} command to make things
%  simpler here.)
 
% \vspace{11pt}

% \bf{If you include a photo:}\vspace{-33pt}

\begin{IEEEbiography}
[{\includegraphics[width=1in,height=1.25in,clip,keepaspectratio]{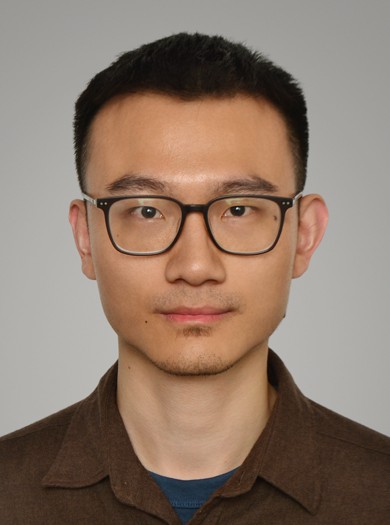}}]
{Dongliang Zheng}(Member, IEEE) received his Ph.D. degree in aerospace engineering from Georgia Institute of Technology, Atlanta, GA, USA, in 2024, the M.S. degree in control engineering from Shanghai Jiao Tong University, Shanghai, China, in 2018, and the B.Eng. degree in automation from Northeastern University, Shenyang, China, in 2015. He is currently an Associate Professor at the Shanghai Research Institute for Intelligent Autonomous Systems, Tongji University.
His research interests include robot motion planning and autonomy.
\end{IEEEbiography}

\begin{IEEEbiography}
[{\includegraphics[width=1in,height=1.25in,clip,keepaspectratio]{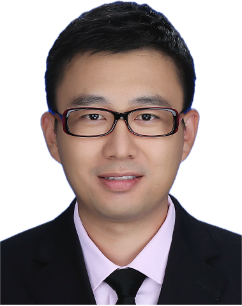}}]{Zhipeng Wang}(Member, IEEE) received the M.S. degree in mechanical manufacturing and automation from Zhejiang University, Hangzhou, China, in 2011, and the Ph.D. degree in control science and engineering from Tongji University, Shanghai, China, in 2015. He is currently an Associate Professor with the Department of Control Science and Engineering, Tongji University. 

His research interests include robot motion planning, mechatronics, and dynamics.
\end{IEEEbiography}

\begin{IEEEbiography}
[{\includegraphics[width=1in,height=1.25in,clip,keepaspectratio]{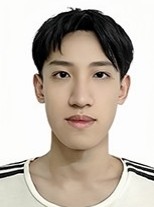}}]{Siqi Wang} received the B.E. degree in  Control Science and Engineering from Tongji University, Shanghai, China, in 2023. He is currently pursuing the Ph.D. degree with the Shanghai Research Institute for Intelligent Autonomous Systems, Tongji University, China. 

His research interests include the motion planning and safe collaborative operation of dual-arm robots.
\end{IEEEbiography}

\begin{IEEEbiography}
[{\includegraphics[width=1in,height=1.25in,clip,keepaspectratio]{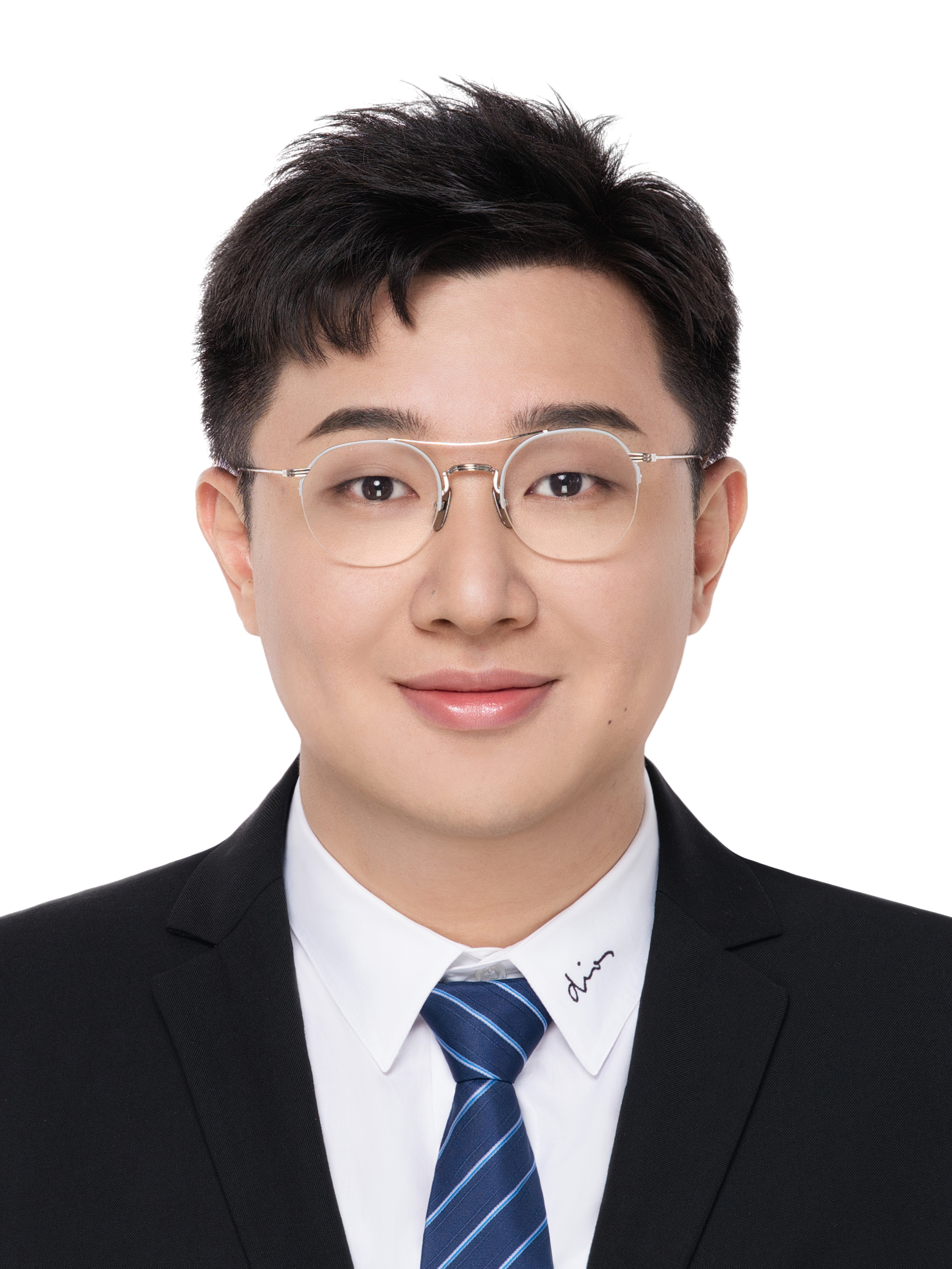}}]{Yuxi Lu}(Member, IEEE) received the Ph.D. degree in Medical Engineering from the Graduate School of Science and Engineering, Chiba University, Chiba, Japan, in 2024, and the M.S. and B.S. degrees in Engineering from Tokyo University of Science, Tokyo, Japan, in 2021 and 2019, respectively.
He is currently an Assistant Professor with the Shanghai Research Institute for Intelligent Autonomous Systems, Tongji University, Shanghai, China. 

His current research interests focus on medical robotics and soft robotic systems. % particularly their applications in minimally invasive surgery and rehabilitation.
\end{IEEEbiography}

\begin{IEEEbiography}
[{\includegraphics[width=1in,height=1.25in,clip,keepaspectratio]{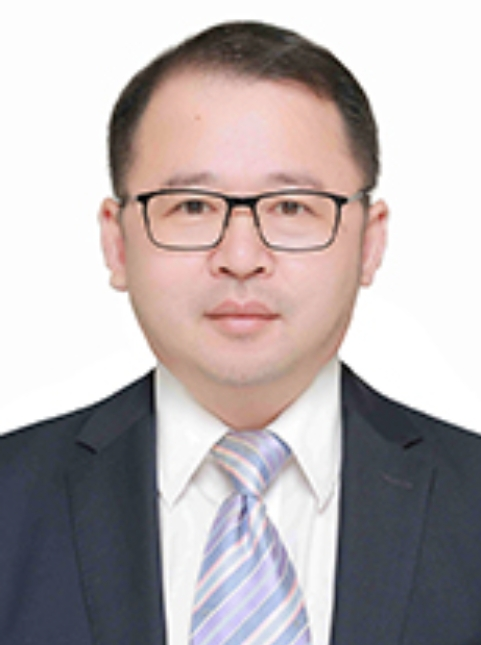}}]{Bin He}
(Senior Member, IEEE) received the Ph.D. degree in mechanical and electronic control engineering from Zhejiang University, Hangzhou, China, in 2001. From 2001 and 2003, he held Postdoctoral research appointments with the State Key Lab of Fluid Power Transmission and Control, Zhejiang University. He is currently a Professor with
the College of Electronics and Information Engineering, Tongji University, Shanghai, China.
His current research interests include intelligent robot control, biomimetic microrobots, and wireless networks.
\end{IEEEbiography}

\begin{IEEEbiography}
[{\includegraphics[width=1in,height=1.25in,clip,keepaspectratio]{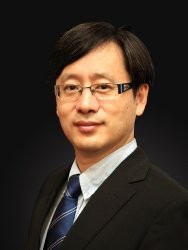}}]{Hesheng Wang}
(Senior Member, IEEE) received the B.Eng. degree in electrical engineering from the Harbin Institute of Technology, Harbin, China, in 2002, and the M.Phil. and Ph.D. degrees in automation and computer-aided engineering from The Chinese University of Hong Kong, Hong Kong, in 2004 and 2007, respectively.
He is currently a Distinguished Professor with the Department of Automation, Shanghai Jiao Tong University, Shanghai, China. His research interests include visual servoing, intelligent robotics, computer vision, and autonomous driving. 

Dr. Wang is an Associate Editor for Robotic Intelligence and Automation and the International Journal of Humanoid Robotics, a Senior Editor for IEEE/ASME Transactions on Mechatronics, an Editor-in-Chief for
Robot Learning. He was an Associate Editor for IEEE Transactions on Robotics from 2015 to 2019, an Associate Editor for IEEE Transactions on Automation Science and Engineering from 2021 to 2023, and an Editor for the Conference Editorial Board of the IEEE Robotics and Automation Society from 2022 to 2024. He was the General Chair of IEEE ROBIO 2022 and IEEE RCAR 2016, and Program Chair of the IEEE ROBIO 2014 and IEEE/ASME AIM 2019. He will be the General Chair of IEEE/RSJ IROS 2025.
\end{IEEEbiography}

\begin{IEEEbiography}
[{\includegraphics[width=1in,height=1.25in,clip,keepaspectratio]{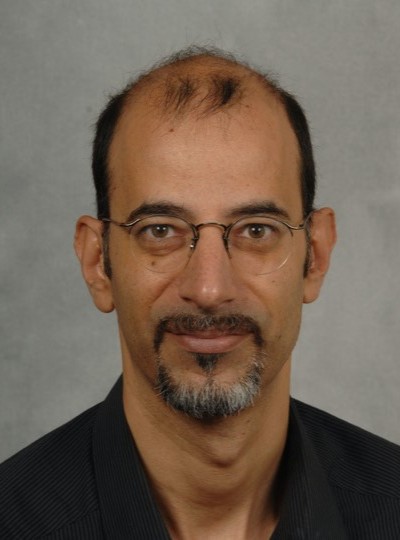}}]{Panagiotis Tsiotras}
(F’19) is the David and Andrew Lewis Chair Professor in the Daniel Guggenheim School of Aerospace Engineering at the Georgia Institute of Technology, Atlanta, GA, USA. He received his Ph.D. degree in aerospace engineering and an M.S. degree in mathematics from Purdue University, IN, USA, an M.S. degree in aerospace
engineering from Virginia Tech, VA, USA and an Eng. Dipl. in mechanical engineering from NTUA,
Athens, Greece. He has held visiting research appointments at MIT, JPL, INRIA Rocquencourt, and
Mines ParisTech.

His research interests include optimal control of nonlinear systems and ground, aerial, and space vehicle autonomy. He has served on the Editorial Boards of the Transactions on Automatic Control, the IEEE Control Systems
Magazine, the AIAA Journal of Guidance, Control and Dynamics, the Dynamic Games and Applications, and Dynamics and Control. He is the recipient of the NSF CAREER award, the Outstanding Aerospace Engineer
award from Purdue, and the Technical Excellence Award in Aerospace Control from IEEE. He is a Fellow of IEEE, AIAA and AAS.
\end{IEEEbiography}

\vfill

\end{document}